\documentclass{article}

\PassOptionsToPackage{numbers}{natbib}

\usepackage[preprint]{neurips_2026}

\usepackage[utf8]{inputenc}
\usepackage[T1]{fontenc}

\usepackage[table]{xcolor}
\usepackage{graphicx}
\usepackage{subcaption}
\usepackage{booktabs}
\usepackage{multirow}
\usepackage{caption}
\usepackage{hhline}
\usepackage{tikz}
\usetikzlibrary{calc}
\usepackage{arydshln}
\usepackage[inline]{enumitem}
\usepackage{nicefrac}
\usepackage{comment}
\usepackage{sectsty}
\usepackage{adjustbox}
\usepackage{import}
\usepackage[textsize=tiny]{todonotes}
\usepackage{wrapfig}
\usepackage{float}
\usepackage{placeins}
\usepackage[
	activate={true,nocompatibility},
	tracking=true,
	kerning=true,
	spacing={true},
	factor=1100,
	stretch=10,
	shrink=10,
	nopatch=footnote,
	final,
]{microtype}
\usepackage{url}
\usepackage{longtable}
\usepackage{booktabs}
\usepackage{multirow}
\usepackage{rotating}
\usepackage{pdflscape}

\usepackage{amsmath,amssymb,amsfonts,amsthm,mathtools}

\theoremstyle{plain}
\newtheorem{theorem}{Theorem}[section]

\newtheorem{lemma}[theorem]{Lemma}
\newtheorem{corollary}[theorem]{Corollary}

\theoremstyle{definition}

\theoremstyle{remark}

\makeatletter
\let\latex@underline\underline
\let\latex@underbar\underbar
\makeatother

\usepackage{algorithm}
\usepackage{algpseudocode}
\usepackage{setspace}
\makeatletter
\let\underline\latex@underline
\let\underbar\latex@underbar
\makeatother

\usepackage[colorlinks,citecolor=blue,linkcolor=blue,urlcolor=blue]{hyperref}
\usepackage[capitalize,noabbrev]{cleveref}

\newcommand{\ie}{\textsl{i.\,e.}}
\newcommand{\eg}{\textsl{e.\,g.}}
\newcommand{\defeq}{\coloneqq}

\DeclareMathOperator*{\argmin}{arg\,min}

\newcommand{\E}{\mathbb{E}}
\newcommand{\R}{\mathbb{R}}
\newcommand{\N}{\mathcal{N}}

\newcommand{\In}{\mI_{n}}
\newcommand{\Ik}{\mI_{k}}

\newcommand{\zk}{\begin{pmatrix} \vz, & 0 \end{pmatrix}^\top }
\newcommand{\zkt}{(\vz^{k},\,0)}

\newcommand{\zdk}{\bm{z}_{\downarrow k}}

\newcommand{\Gk}{\bm{G_{k}}}
\newcommand{\Gn}{\bm{G_{n}}}

\newcommand{\Sk}{\bm{\Sigma_{k}}}

\newcommand{\diag}{\text{diag}}
\newcommand{\Tr}{\text{Tr}}

\newcommand{\zmapk}{\hat{\vz}_{\gamma}(k)}
\newcommand{\zmlek}{\hat{\vz}(k)}

\newcommand{\xmapk}{\hat{\vx}_{\gamma}(k)}
\newcommand{\xmlek}{\hat{\vx} (k)}
\newcommand{\Skmat}{\mS_{k}}
\newcommand{\Snkmat}{\mS_{k}^{\perp}}

\newcommand{\zmapkk}{\begin{pmatrix} \zmapk, & 0 \end{pmatrix}^\top }
\newcommand{\zmlekk}{\begin{pmatrix} \zmlek, & 0 \end{pmatrix}^\top }

\newcommand{\Pk}{\mathcal{P}_{k}}
\newcommand{\Pko}{\mathcal{P}_{k}^{\perp}}

\usepackage{amsmath,amsfonts,bm}

\def\eqref#1{equation~\ref{#1}}

\def\1{\bm{1}}

\def\beps{\bm{\epsilon}}

\def\rmA{{\mathbf{A}}}

\def\vx{{\bm{x}}}
\def\vy{{\bm{y}}}
\def\vz{{\bm{z}}}

\def\mA{{\bm{A}}}

\def\mG{{\bm{G}}}

\def\mI{{\bm{I}}}

\def\mM{{\bm{M}}}

\def\mS{{\bm{S}}}

\def\mU{{\bm{U}}}
\def\mV{{\bm{V}}}

\def\mY{{\bm{Y}}}

\DeclareMathAlphabet{\mathsfit}{\encodingdefault}{\sfdefault}{m}{sl}
\SetMathAlphabet{\mathsfit}{bold}{\encodingdefault}{\sfdefault}{bx}{n}

\def\gA{{\mathcal{A}}}

\def\gC{{\mathcal{C}}}
\def\gD{{\mathcal{D}}}
\def\gE{{\mathcal{E}}}

\def\gL{{\mathcal{L}}}

\def\gN{{\mathcal{N}}}

\newcommand{\pdata}{p_{\text{data}}}

\title{Tunable Latent Generative Priors for
Compressed Sensing and Inverse Problems}

\author{
  Sean Gunn \\
  Northeastern University \\
  \texttt{gunn.s@northeastern.edu} \\
  \And
  Jorio Cocola \\
  Harvard University \\
  \texttt{jcocola@seas.harvard.edu} \\
  \And
  Oliver De Candido \\
  Technical University of Munich \\
  \texttt{oliver.de-candido@tum.de} \\
  \AND
  Vaggos Chatziafratis \\
  UC Santa Cruz \\
  \texttt{vaggos@ucsc.edu} \\
  \And
  Paul Hand \\
  Northeastern University \\
  \texttt{p.hand@northeastern.edu} \\
}

\begin{document}

\maketitle

\begin{abstract}
Latent generative models have emerged as powerful priors for solving inverse problems. These models typically represent a class of natural signals at a single, fixed complexity, governed by the latent dimensionality. This can be limiting: depending on the problem, a latent dimensionality that is too small may result in high representation error, while one that is too large may overfit to noise. We develop tunable latent priors for diffusion models, normalizing flows, and variational autoencoders, leveraging nested dropout. Across tasks including compressed sensing, inpainting, denoising, and phase retrieval, we show empirically that tunable priors consistently achieve lower reconstruction errors than fixed-complexity baselines. In the linear denoising setting, we derive the optimal complexity in closed form, showing how it depends on the noise level and the signal spectrum. This work demonstrates the potential of tunable latent generative priors and motivates both the development of supporting theory and their application across a wide range of inverse problems.
\end{abstract}

\section{Introduction}
\label{sec:intro}

Inverse problems aim to reconstruct an unknown signal, potentially corrupted by noise, from a set of measurements given by a forward model, which may or may not be known in advance. Such a formulation applies to various image-processing applications, including compressed sensing, denoising, and super-resolution. In practice, inverse problems are ill-posed and thus they require prior information about the signal to yield a successful recovery~\cite{10035380}.

Deep generative models have been demonstrated to be powerful signal priors when used to solve inverse problems~\cite{bora2017compressed,NEURIPS2018_1bc2029a,asim2020invertible,ILO2021,pmlr-v162-daras22a,song2022solving}.  The process of using a deep generative prior typically has two phases: training and inversion. In the training phase, a generative neural network is trained on a dataset representative of the natural signal class intended for inversion.  In the second phase, the model parameters are fixed from training, and an algorithm is deployed to estimate the signal of interest for a given forward operator. This approach has the benefit that the prior can be learned in isolation from the inverse problems being solved.  Thus, the approach can apply to a variety of inverse problems, which is in contrast to other neural network-based approaches such as end-to-end training.

\begin{figure*}[t!]
	\centering
	\includegraphics[width=1\textwidth]{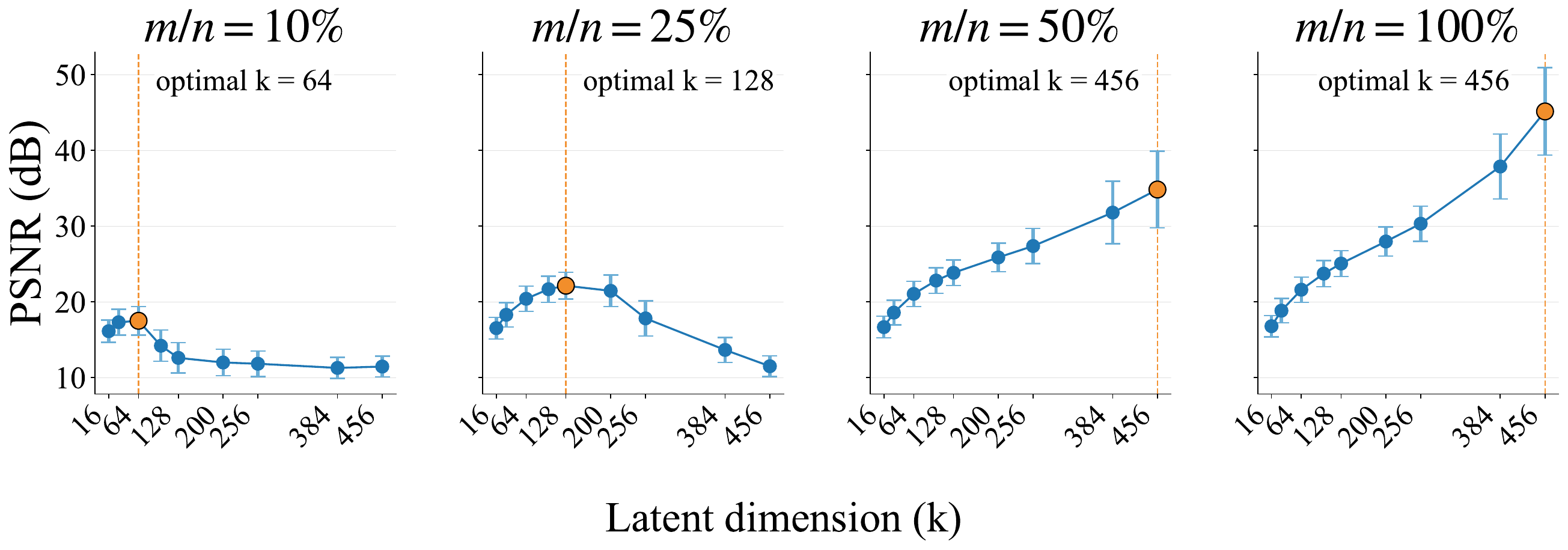}
	\caption{\textbf{Intermediate latent dimensionalities yield the best reconstruction
			at low measurement ratios.} Separate injective flow models (no parameter
		sharing) are trained for each latent dimensionality $k \in [16, 456]$ on
		MNIST ($n = 32 \times 32 = 1024$). Each panel corresponds to a different
		measurement count $m$: for $m < n$, the forward operator is an $m \times n$
		random Gaussian matrix (compressed sensing); note $m = 1024$ is the identity.
		Intermediate $k$ ($64 \leq k \leq 200$) minimizes reconstruction error at
		small $m/n$, with the optimal $k$ shifting as $m$ grows (error bars:
		$\pm 1$ std).}
	\label{fig:naive_tunable}
\end{figure*}

In recent years, there has been a dominant framework for using generative models as priors for inverse problems. Within this framework, generative priors have a fixed complexity that is set during training. For example, existing priors include generative adversarial networks (GANs), which typically have a latent space of fixed low dimensionality \cite{bora2017compressed,NEURIPS2018_1bc2029a,DBLP:conf/cvpr/MenonDHRR20,NEURIPS2020_ad62cfd3, NEURIPS2019_598a9000}; normalizing flow models, which have a latent space of fixed high-dimensionality equal to that of the images \cite{asim2020invertible,whang2021solving,ardizzone2018analyzing,liu2023optimization}; and diffusion models, which maintain information about the probability density over the fixed high-dimensional space of all images \cite{song2022solving,meng2023quantized,jalal2021instance,dou2024diffusion,chung2023diffusion}.

We step outside the dominant framework and demonstrate the benefits of solving inverse problems using generative priors whose complexity can be selected by the user at inference time, after training. Such generative priors simultaneously maintain representations of varying complexities of the natural signal class, and we refer to them as \textbf{tunable generative priors}. \textit{In this paper, we demonstrate that latent tunable generative priors can significantly improve reconstruction when the latent dimension $k$ is
	matched to a specific inverse problem. By training the model to preserve
	meaningful representations across values of $k$, we enable practitioners
	to select an appropriate $k$ at inference time, without retraining.}

To motivate our approach, we first consider injective flows \cite{ross2021tractable}. We train a family of models, each with a distinct latent dimensionality $k$ and no parameter sharing, and apply them to compressed sensing with random measurements. As shown in \Cref{fig:naive_tunable}, reconstruction error follows an inverted U-shaped curve: with fewer measurements, intermediate values of $k$ outperform both larger and smaller ones. Moreover, the optimal complexity depends on the number of available measurements. While this experiment requires training separate models for each $k$, a na\"ive and computationally expensive strategy, it illustrates the importance of tunability. This motivates the more efficient algorithms we develop next, which scale to practical settings.

As an initial theoretical exploration into the effect of model tunability, we study denoising in the case of an invertible linear generative model and the best lower-dimensional linear models that approximate it. In this setting, we derive an explicit maximum-likelihood criterion for selecting the optimal model complexity. Our analysis shows that the optimal complexity depends on the generator’s singular values relative to the noise level: directions with singular values above the noise level improve reconstruction, while lower-signal directions can overfit noise.

Building on this motivation, we will show how tunability can be achieved efficiently in three major classes of generative models: variational autoencoders (VAEs)~\cite{Kingma2014}, normalizing flows (NFs)~\cite{durkan2019neural}, and latent diffusion models (LDMs)~\cite{Rombach_2022_CVPR}. In each case, we design a single family of models with parameter sharing and provide an efficient training procedure. For LDMs, we introduce a new algorithm based on nested dropout~\cite{rippel2014learning}, described in \cref{subsec:LDM_ND}. For NFs, we adopt an existing ordering method~\cite{bekasov2020ordering} (\Cref{app:NF}). For VAEs, we extend the adversarial objective of~\cite{Esser_2021_CVPR} with a nested dropout regularization term. Across all three settings, we empirically demonstrate that tunable generative priors consistently achieve lower reconstruction errors than fixed-complexity baselines over a range of inverse problems, undersampling ratios, and noise levels (\Cref{fig:tunable_CELBA}).

The results in this paper demonstrate the benefits of using generative priors with a tunable complexity for inverse problems both empirically and theoretically. Empirically, we show that tunable complexity consistently improves reconstruction quality across multiple generative model architectures, inverse problem types, and inversion algorithms.

\textbf{Our main contributions are as follows:}
\begin{itemize}

	\item We observe a potentially surprising phenomenon in the use of latent generative priors for solving inverse problems. We train a single generative model to simultaneously represent the natural signal class across multiple latent dimensionalities $k$. Across a variety of architectures, inverse problems, and inversion algorithms, we empirically find that some intermediate latent dimensionality yields an improved reconstruction error.

	\item We propose a new training algorithm for latent diffusion that utilizes nested dropout and a convex combination of the original and truncated latent objectives. This yields a tunable latent diffusion model that learns hierarchical representations across latent dimensionalities, enabling a single model to be tuned as a prior for downstream inverse problems.

\end{itemize}
\section{Background and Related Work}
The background will focus on diffusion models due to their relevance in recent research trends. For additional details on normalizing flows and variational autoencoders, we refer the reader to the appropriate references.

\subsection{Diffusion Models}
\label{sub:DM}

The standard diffusion framework gradually corrupts data
$\vx_0\!\sim\!\pdata\ \vx\in\R^n$ into noise $\vx_T\!\sim\!\gN(0,\mI_n)$ via a forward noising process, and learns a parameterization of the reverse process to recover the data distribution~\citep{NEURIPS2020_4c5bcfec,song2021scorebased,dhariwal2021diffusion}. We adopt latent diffusion models (LDMs)~\citep{Rombach_2022_CVPR}, which improve computational efficiency by performing diffusion in a learned latent space: a \emph{variational autoencoder (VAE)}~\citep{Kingma2014} with encoder $\gE\colon\R^n\!\to\!\R^k$ and decoder $\gD\colon\R^k\!\to\!\R^n$ (typically $k\!\ll\!n$) yields $\vz_0=\gE(\vx_0)$ and $\vx_0\!\approx\!\gD(\vz_0)$ for all $\vx_0\!\sim\!\pdata$. The encoder maps $\pdata$ to the latent distribution $p_Z\defeq \gE_{\#}\pdata$, and the diffusion process then operates on the latent variable $\vz_t$.

With schedule $\{\beta_t\}_{t=1}^T$, $\alpha_t\defeq 1-\beta_t$, and $\bar\alpha_t\defeq \prod_{j=1}^t\alpha_j$ (with $\bar\alpha_0=1$), the forward marginal in latent space is
\begin{align}
	\label{eq:ddpm_marginal}
	\vz_t = \sqrt{\bar\alpha_t}\,\vz_0 + \sqrt{1-\bar\alpha_t}\,\beps,
	\qquad \beps\sim\gN(0,\mI_k), \tag{forward marginal}
\end{align}
which allows sampling $\vz_t$ at any timestep $t$ directly from $\vz_0$. The reverse process is parameterized by a neural network:
\begin{align}
	\label{eq:ddpm_reverse}
	\vz_{t-1} = \tfrac{1}{\sqrt{\alpha_t}}\!\left(\vz_t -
	\tfrac{1-\alpha_t}{\sqrt{1-\bar\alpha_t}}\,\epsilon_\theta(\vz_t,t)\right)
	+ \sigma^{\text{DDPM}}_t\,\beps'_t,
	\qquad \beps'_t\sim\gN(0,\mI_k), \tag{reverse}
\end{align}
where $\displaystyle \sigma^{\text{DDPM}}_t=\sqrt{\tfrac{1-\bar\alpha_{t-1}}{1-\bar\alpha_t}\,(1-\alpha_t)}$ is the posterior standard deviation. Write $\hat{\beps}_t=\beps_\theta(\vz_t,t)$ for the predicted noise and define the clean-latent estimate
\[
	\hat{\vz}_0(\vz_t)=\frac{\vz_t-\sqrt{1-\bar\alpha_t}\,\hat{\beps}_t}{\sqrt{\bar\alpha_t}}.
\]
DDIM~\citep{song2021denoising} uses a distinct reverse update:
\[
	\vz'_{t-1}
	=\sqrt{\bar\alpha_{t-1}}\,\hat{\vz}_0(\vz_t)
	+\sqrt{1-\bar\alpha_{t-1}-\sigma_t^2}\,\hat{\beps}_t
	+\sigma_t\beps'_t,
	\qquad
	\sigma_t=\eta\sqrt{\frac{1-\bar\alpha_{t-1}}{1-\bar\alpha_t}(1-\alpha_t)}.
\]
Here $\eta\in[0,1]$ controls DDIM stochasticity, and $\eta=0$ gives deterministic DDIM.

A denoising network $\epsilon_{\theta}$ (typically a U-Net) is trained to predict the noise in the latent space:
\begin{align}
	\label{eqn:ldm_standard}
	\mathcal{L}_{\text{LDM}}
	= \E_{\,t\sim\mathcal{U}(\{1,\dots,T\}),\ \vz_0,\ \beps\sim\gN(0,\mI)}
	\big\|\,\beps - \epsilon_{\theta}\!\big(\sqrt{\bar\alpha_t}\,\vz_0 + \sqrt{1-\bar\alpha_t}\,\beps,\,t\big)\big\|_2^2,
\end{align}
where $t \sim \mathcal{U}(\{1,\dots,T\})$ denotes that the timestep is sampled uniformly from all diffusion steps.
To enable latent diffusion, the autoencoder is trained using reconstruction, latent regularization, and adversarial losses. These terms respectively encourage accurate reconstructions, alignment with a reference latent prior, and perceptually realistic outputs:
\begin{align}
	\label{eq:auto_train}
	\gL_{\text{Autoencoder}}
	= \gL_{\text{rec}}\big(\vx,\,\gD(\gE(\vx))\big)
	+ \lambda_{\text{reg}}\,\gL_{\text{reg}}\big(\gE(\vx)\big)
	+ \lambda_{\text{adv}}\,\gL_{\text{adv}}\!\big(\gC,\,\vx,\,\gE(\vx)\big).
\end{align}

\subsection{Inverse Problems with Generative Priors}
\label{sub:IP_Background}
We consider the general linear inverse problem of recovering an unknown signal
$\vx \in \R^n$ from noisy measurements $\vy \in \R^m$,
\begin{align}
	\vy = \gA(\vx) + \bm{\eta}, \quad \bm{\eta} \sim \gN(0,\sigma^{2}\mI_{m}),
\end{align}
where $\gA\colon\R^n \to \R^m$ \citep{10035380} is the forward operator (\eg, a linear projection, convolutional blur, or other transformation), and $\bm{\eta}$ is additive noise drawn from a Gaussian distribution. Generative models can be employed as priors for inverse problems in two main ways:
supervised, where the forward operator is known during training and models
are trained on paired data $\{(x_i, y_i)\}$, or unsupervised, where the
forward operator is unknown~\citep{10035380,li2025diffusionmodelsimagerestoration}.

For inverse problems, we modify the reverse dynamics in \cref{eq:ddpm_reverse}
to account for measurements $\vy = \gA(\vx) + \bm{\eta}$.
Since we operate entirely in latent space ($\vx=\gD(\vz)$), posterior sampling is guided by the
conditional score, which decomposes as
\begin{align}
	\nabla_{\vz_t} \log p_t(\vz_t \mid \vy)
	= \nabla_{\vz_t} \log p_t(\vz_t)
	+ \nabla_{\vz_t} \log p_t(\vy \mid \vz_t).
\end{align}
The first term is provided by the pretrained diffusion prior through the
denoising network $\epsilon_\theta$, while the second term enforces
data-consistency with the forward operator $\gA$ under the measurement model.

Following the taxonomy of interleaving methods highlighted by \cite{wang2024dmplug}, these solvers alternate unconditional reverse diffusion steps (\eg, DDIM) with data-consistency corrections that either explicitly approximate the measurement likelihood or enforce feasibility. One line of methods explicitly approximates the measurement-likelihood term via a projection or gradient update~\citep{jalal2021instance,kawar2021snips,chung2023diffusion,wang2023zeroshot,RoutCKCSC24}. A canonical formulation is latent diffusion posterior sampling~\citep{rout2023solving}:
\begin{align}
	\nabla_{\vz_t}\log p(\vy\mid\vz_t)
	\;\approx\;
	-\frac{1}{2\sigma^2}\nabla_{\vz_t}
	\left\|\vy-\gA\!\left(\gD(\hat{\vz}_0(\vz_t))\right)\right\|_2^2.
\end{align}
This is a Gaussian-likelihood approximation for measurement noise $\sigma>0$, evaluated at the noise-prediction estimate $\hat{\vz}_0(\vz_t)$ defined in \cref{sub:DM}. This estimate approximates the posterior mean via Tweedie's formula~\citep{Efron01122011}. Differentiation is through the full map $\vz_t\mapsto\hat{\vz}_0(\vz_t)\mapsto\gD(\hat{\vz}_0(\vz_t))\mapsto\gA(\gD(\hat{\vz}_0(\vz_t)))$, including the denoising network.

Another class of methods does not explicitly compute the measurement likelihood, but instead approximates the feasible set of solutions $\{\vx \mid \vy=\gA(\vx)\}$~\cite{kawar2021snips,kawar2022denoising,song2024solving}. In particular \cite{song2024solving} employs  a hard data-consistency term that enforces $\gD(\vz)\in\{\,\vx \mid \vy=\gA(\vx)\,\}$, which is approximated in practice by gradient descent.

\section{Methods}

\begin{figure*}[t!]
	\centering
	\captionsetup{hypcap=false}
	\includegraphics[height=0.175\textheight]{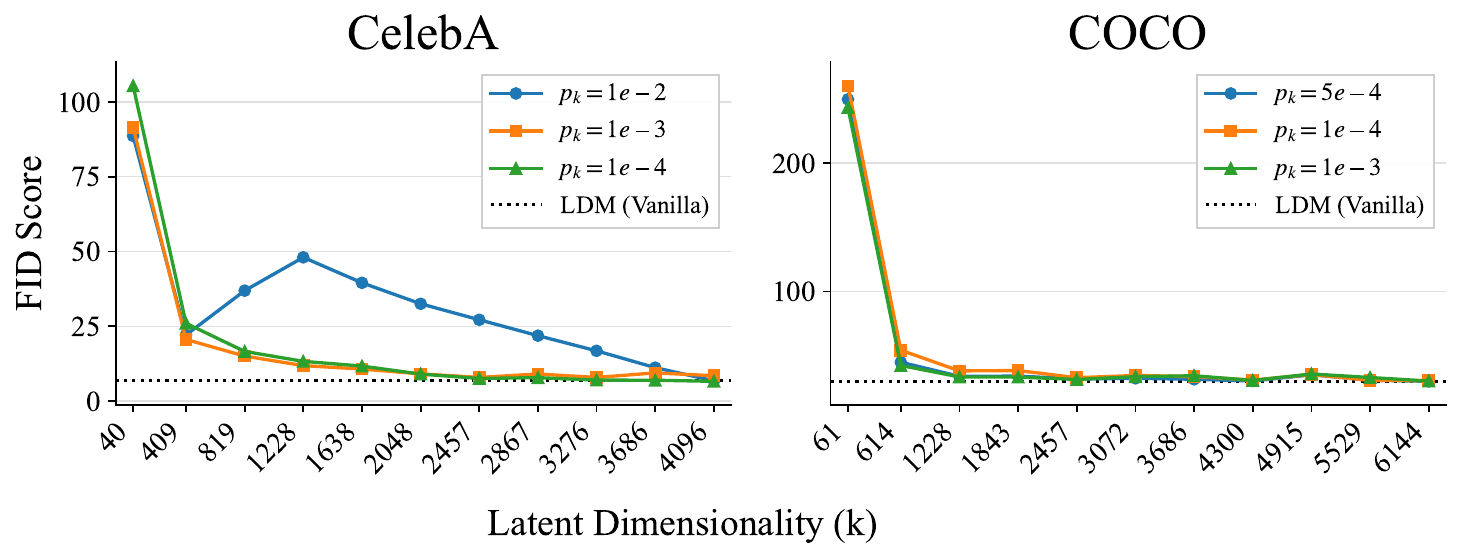}
	\caption{\textbf{Nested dropout training produces tunable latent diffusion models that maintain generation quality across latent dimensionalities.} FID score is plotted as a function of latent dimensionality $k$ for models trained with different values of the dropout distribution parameter $p_k$. The vanilla LDM baseline (dotted line) operates only at full dimensionality. As $k$ increases, the tunable models approach baseline performance while retaining the ability to generate from lower-dimensional representations. Results are evaluated on 50k training images from each dataset. FID scores computed with \citep{Parmar_2022_CVPR}.}
	\label{fig:fid_both}
\end{figure*}

This section describes the training of a tunable latent diffusion model (\cref{subsec:LDM_ND}) and its application as a prior for inverse problems (\cref{subsec:IP_TP}). We begin by training a variational autoencoder (VAE) as the backbone of the latent diffusion model. The VAE consists of an encoder $\gE\colon\R^n \to \R^k$ and a decoder $\gD\colon\R^k \to \R^n$, such that for samples $\vx_0 \sim \pdata(\vx_0)$ we have $\vx_0 \approx \gD(\gE(\vx_0))$. Once the autoencoder is trained, a denoising network $\beps_\theta(\vz_t,t)$ is trained in latent space to predict the Gaussian noise at each diffusion step $t$. Finally, diffusion is performed in latent space using the denoising network $\beps_\theta$.

\subsection{Training a Tunable Latent Diffusion Model }
\label{subsec:LDM_ND}
We aim to train a latent generative model that can represent the natural signal class across multiple latent dimensionalities. To achieve this, we leverage nested dropout~\citep{rippel2014learning}, which imposes an ordered structure on the latent variables by always preserving a prefix of coordinates. Formally, let $k \sim p_k$ be drawn from $\{1,\dots,d\}$.
In our experiments, we use a truncated geometric distribution with success parameter $p$, but other distributions over $\{1,\dots,d\}$ can be adopted depending on the application. The truncation operator is defined as
$\zdk = [\vz_{1},\vz_{2},\dots,\vz_{k},0,\dots,0]$ with $k \leq d$.
Given $\vz=\gE(\vx)$, reconstruction is performed from $\zdk$,
which encourages earlier coordinates to carry more information about the signal class.

Latent diffusion is trained in two stages. First, the autoencoder is trained to reconstruct $\vx\in\R^n$ robustly across a range of latent dimensions $k$. Building on \cref{sub:DM}, the VAE backbone is trained with reconstruction, regularization, and adversarial terms, which we extend here with a nested dropout objective. This follows the standard VAE training objective with an added nested dropout term:
\begin{align}
	\label{eq:vae_drop}
	\mathcal{L}_{\text{VAE}}
	= \min_{E,\,D}\ \max_{\gC}\ \Big[
		                            &\ \gL_{\text{rec}}\big(\vx,\,\gD(\gE(\vx))\big)
		                            + \lambda_{\text{reg}}\,\gL_{\text{reg}}\big(\gE(\vx)\big) \nonumber \\
		                            &\ + \lambda_{\text{adv}}\,\gL_{\text{adv}}\!\big(\gC,\,\vx,\,\gE(\vx)\big)
		                            + \lambda_{\text{drop}}\,\gL_{\text{drop}}\big(\vx,\,\gD(\gE(\vx)_{\downarrow k})\big)
		                            \Big],
\end{align}
where $\gC : \R^{n} \to (0,1)$ is a discriminator.
We adopt a perceptual loss~\citep{zhang2018unreasonableeffectivenessdeepfeatures} for $\gL_{\text{drop}}$.

In the second stage, the diffusion model is trained in latent space with a loss that interpolates  between the standard diffusion objective and its truncated-latent variant. For $\lambda \in [0,1]$, we define
\begin{align}
	\gL_{\text{LDM}}
	 & = \E_{\gE(\vx),\beps \sim \mathcal{N}(0,1), t \sim \mathcal{U}(\{1,\dots,T\})}
	\Big[(1-\lambda)
		\left\| \beps - \beps_{\theta}(\vz_t, t) \right\|_2^2
		+ \lambda \left\| \beps - \beps_{\theta}((\vz_t)_{\downarrow k}, t) \right\|_2^2 \Big].
\end{align}

The first term is the standard latent diffusion objective, while the second applies it to the truncated latent $(\vz_t)_{\downarrow k}$, encouraging effective denoising even from a reduced representation. Here, $\vz_t$ is sampled from the forward DDPM process (\cref{eq:ddpm_marginal}), and $k$ is drawn from the same distribution $p_k$ used in the VAE objective, ensuring consistency. The result is a hierarchically organized latent space: the first few coordinates capture the most essential signal structure, with each additional coordinate contributing finer detail as the model's representational capacity grows with $k$. This effect is illustrated in \cref{fig:fid_both}, which shows how FID varies with latent dimensionality under different hyperparameters. Lastly, for details about training, please see Appendix~\ref{app:LDM}.

\subsection{Inverse Problems with Tunable Priors}
\label{subsec:IP_TP}
\begin{algorithm}[t!]
	\caption{General Template for Tunable Diffusion Priors}
	\label{algo:general}
	\setstretch{0.85}
	\begin{algorithmic}[1]
		\State \textbf{Input:} $\vy$, $\gA$, $\gD$, $\beps_\theta$, steps $T$, schedule $\{\bar\alpha_t\}_{t=0}^T$, parameter $k$
		\Statex \hspace{\algorithmicindent} Reverse sampler (DDIM or DDPM), data-consistency correction
		\State \textbf{Output:} $\gD(\vz_0)$
		\State $\vz_T\sim\mathcal{N}(0,\mI_d)$; $\vz_T\gets(\vz_T)_{\downarrow k}$
		\For{$t=T,\ldots,1$}
		\State $\hat{\beps}_t\gets\beps_\theta(\vz_t,t)$
		\State $\hat{\vz}_0\gets(\vz_t-\sqrt{1-\bar\alpha_t}\,\hat{\beps}_t)/\sqrt{\bar\alpha_t}$
		\State $\vz'_{t-1}\gets$ chosen DDIM or DDPM reverse update using $\vz_t,\hat{\vz}_0,\hat{\beps}_t$
		\State $\vz_{t-1}\gets$ data-consistency correction of $\vz'_{t-1}$ using $\vy,\gA,\gD$
		\State $\vz_{t-1}\gets(\vz_{t-1})_{\downarrow k}$
		\EndFor
		\State \textbf{return} $\gD(\vz_0)$
	\end{algorithmic}
\end{algorithm}

A broad family of methods applies a data-consistency step via projection, gradient, or small optimization after each reverse update to move the prior iterate toward the feasible set $\{\vx \mid \gA(\vx)=\vy\}$. In latent space, representative examples include Posterior Sampling with Latent Diffusion (PSLD)~\citep{rout2023solving}, ReSample~\citep{song2024solving}, DAPS~\citep{zhang2025improving}, and our formulation in \cref{algo:posterior}. \Cref{algo:posterior} is a simplified mathematical template for a broadly applicable instantiation across a variety of inverse problems, including nonlinear forward operators. The experiments in \cref{fig:tunable_CELBA} employ an implementation of this template, while \cref{algo:general} describes how the same tunability principle can be incorporated into other latent-space inversion methods.

Both algorithms use $t=T,\ldots,1$ with $\bar\alpha_0=1$ and full latent dimension $d$. For a subsampled schedule, $t-1$ denotes the previous selected timestep (the terminal clean state is indexed by $0$), and the reverse formulas use the effective $\alpha_t=\bar\alpha_t/\bar\alpha_{t-1}$ and $\beta_t=1-\alpha_t$.

Starting from Gaussian noise truncated to $k$ coordinates in the latent space, \cref{algo:general} alternates a learned generative update, a data-consistency correction, and a truncation step. The generative update follows the diffusion prior, the data-consistency step enforces agreement with the measurements $\vy=\gA(\gD(\vz))$, and the truncation operator $(\cdot)_{\downarrow k}$ controls the effective complexity of the representation. Although \cref{algo:general} is written for diffusion priors, the same tunability principle can be applied to flow-based generative models~\citep{lipman2023flow}, where the generative update is defined by a learned velocity field rather than a denoising or score model.
\section{Theory for Denoising with Linear Generative Model}
\begin{figure*}[b!]
	\centering
	\includegraphics[width=0.99\textwidth]{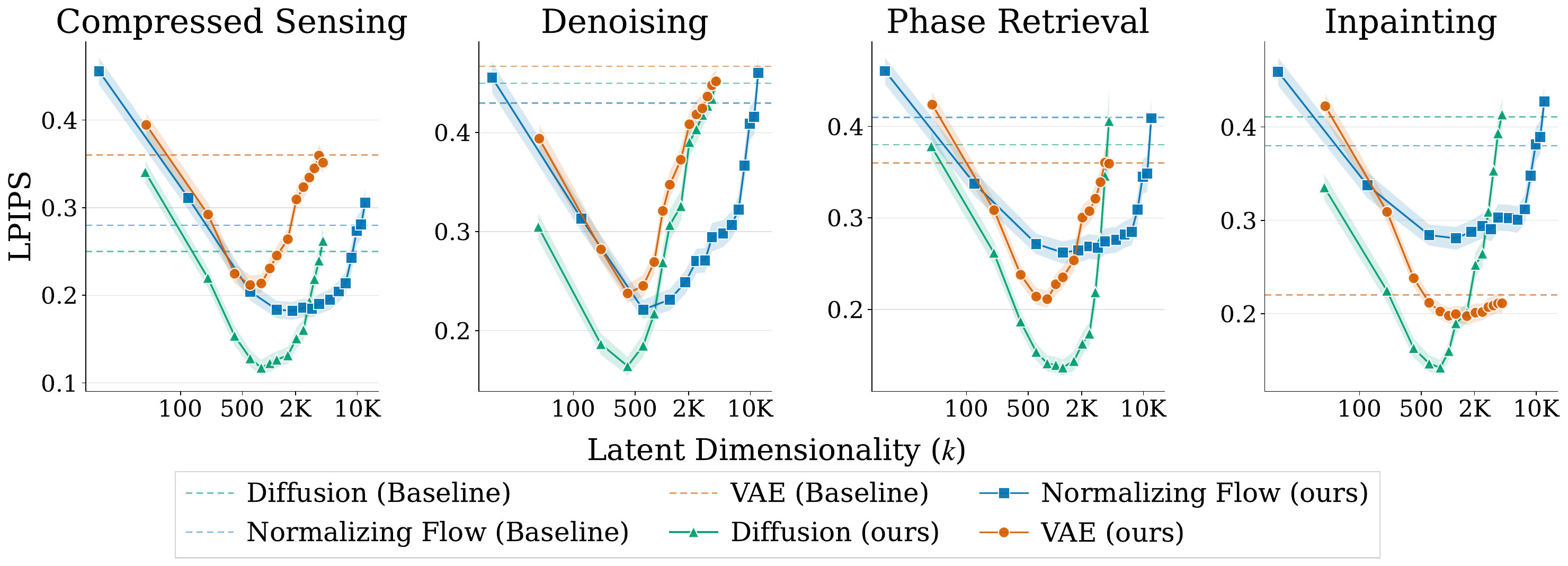}
	\caption{\textbf{Tunable priors outperform fixed-complexity baselines across multiple different methods and inverse problems.} Reconstruction performance (LPIPS, lower is better) is shown as a function of latent dimensionality $k$ for compressed sensing, denoising, phase retrieval, and inpainting on the FFHQ dataset. The tunable LDM prior is compared against the baseline operating at full dimensionality.}
	\label{fig:tunable_CELBA}
\end{figure*}
\label{sec:theory}
The goal of this section is to provide preliminary theoretical conclusions that show tunability can lead to improved reconstruction errors relative to corresponding nontunable models.  Further, this section aims to provide justification for how to select the tuning parameter for a generative model with tunable complexity. We illustrate this in the setting of denoising under additive Gaussian noise, where the benefits of tunability can already be shown for linear generative models.

Consider a family of linear generative models given as follows. Fix an invertible $\mG \in \R^{n \times n}$, and let $\mG=\mU \bm{\Sigma} \mV^{T}$ be a singular value decomposition with $\bm{\Sigma} = \diag(s_1, \ldots, s_n) \in \R^{n \times n}$.
For any $k \leq n$, let $\Gk=\mU \Sk \mV^{T} \in \R^{n \times n}$, where $\Sk = \diag(s_1, \ldots, s_k, 0, \ldots, 0) \in \R^{n \times n}.$  Note that $\mG = \mG_n$.
Each $\mG_k$ induces a probability distribution over $\R^n$ by
\[
	\vx = \mG_k \vz \text{, where }\vz \sim \mathcal{N}(0, \mI_n).
\]
We will refer to this distribution as $p_{\mG_k}$ and observe that $p_{\mG_k} = \N(0, \mG_k \mG_k^T)$.  Note that $k$ is a parameter that governs the complexity of the modeled signal class.

We consider the following denoising problem. Let $\vy= \vx_{0} + \bm{\eta}$, where $\vx_{0} \sim p_{\Gn}$ and $ \bm{\eta} \sim \N(0,\sigma^2 \In)$.  Our goal is to recover $\vx_0$ given $\vy$ and $\mG_n$.   For a given $k$, we consider a maximum likelihood estimate (MLE) of $\vx_{0}$ under the signal prior $p_{\mG_k}$.  This results in the following optimization problem over the latent space:
\begin{align}
	\label{eq:latent_mle}
	\zmlek : \hspace{-.25em}
	 & = \argmin_{\vz^k \in \R^k}\frac{1}{2} \Bigl\|\vy-\Gk \zkt \Bigr\|^2 ,
\end{align}
where $\zkt \in \R^n$ is the vector $\vz^k$ padded with zeroes; the estimated signal is $\xmlek = \mG_k \zmlekk $.  The following theorem provides an exact expression for the mean square error of the estimate above.

\begin{theorem}
	\label{thm:mse_k}
	Suppose we have a family $\{ \Gk \}_{k = 1, \dots, n}$ of generative models as given above, and let $p_{\Gk} = \N(0,\Gk \Gk^T)$, and let $\Gn \in \R^{n \times n}$ have singular values $s_{1} \geq s_{2} \geq \cdots \geq s_{n} >0 $. Let $\vx_{0} \sim p_{\mG_n} $ and $ \bm{\eta} \sim \N(0,\sigma^2 \In)$. Then the estimator given by \eqref{eq:latent_mle} yields
	\begin{align}
		\E_{\vx_0,\,\bm{\eta}}\big[\,\|\xmlek - \vx_{0}\|^2\,\big]
		= k\sigma^2 + \sum\limits_{j=k+1}^{n} s_{j}^2. \nonumber
	\end{align}

\end{theorem}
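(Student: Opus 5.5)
The plan is to solve the least-squares problem \eqref{eq:latent_mle} in closed form, identify $\xmlek$ as an orthogonal projection of $\vy$, and then compute the expected error coordinatewise in the basis given by the left singular vectors $\mU$.

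First, since $s_1,\dots,s_k>0$, the map $\vz^k \mapsto \Gk \zkt$ is injective. Its range is the span of the first $k$ columns $\vu_1,\dots,\vu_k$ of $\mU$. Indeed,
\[
\Gk\zkt=\mU\Sk\mV^T\zkt,
\]
and $\mV^T$ is invertible, so the range of $\Gk$ is $\mU\,\mathrm{range}(\Sk)=\mathrm{span}(\vu_1,\dots,\vu_k)$. (The zero-padded latent still reaches this entire subspace, because the map is injective on a $k$-dimensional domain into a $k$-dimensional range.) Hence the least-squares minimizer exists and is unique. The fitted signal is
\[
\xmlek=\Pk\vy, \qquad \Pk=\mU_k\mU_k^T,
\]
where $\mU_k$ collects $\vu_1,\dots,\vu_k$. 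This step only requires checking that the relevant $k\times k$ block has full rank. I expect it to be the main point to verify, because the zero-padding acts on $\vz$ rather than on $\mV^T\vz$.

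Second, I decompose the error:
\[
\xmlek-\vx_0=\Pk\bm{\eta}-\Pko\vx_0 .
\]
Taking expectations and using the independence of $\vx_0$ and $\bm{\eta}$, both of which have zero mean, the cross term vanishes. This leaves
\[
\E\|\Pk\bm{\eta}\|^2+\E\|\Pko\vx_0\|^2 .
\]
The first term equals $\sigma^2\Tr(\Pk)=k\sigma^2$. For the second, $\vx_0\sim\N(0,\mU\bm{\Sigma}^2\mU^T)$, so
\[
\E\|\Pko\vx_0\|^2=\Tr\!\big(\Pko\mU\bm{\Sigma}^2\mU^T\big)=\sum_{j>k}s_j^2 .
\]
Adding the two terms gives the claimed formula. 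All remaining steps are routine trace computations.
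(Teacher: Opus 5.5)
There is a genuine gap, and it is exactly the point you flagged. Positivity of $s_1,\dots,s_k$ does not make $\vz^k\mapsto\Gk\zkt$ injective. Write $\Gk=\sum_{i\le k}s_i\bm{u}_i\bm{v}_i^T$; then
$\Gk\zkt=\mU_k\,\diag(s_1,\dots,s_k)\,\mV_{11}^T\vz^k$, where $\mV_{11}$ is the leading $k\times k$ block of $\mV$. So the zero-padded map reaches all of $\mathrm{span}(\bm{u}_1,\dots,\bm{u}_k)$ if and only if $\mV_{11}$ is nonsingular. Invertibility of $\mV^T$ only controls the range of $\Gk$ on all of $\R^n$, and your parenthetical assumes the injectivity it is supposed to establish.

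The step cannot be repaired, because the statement itself fails without this condition. Take $\mG=\diag(1,2)$. Every ordered SVD has $\bm{v}_1=\pm\bm{e}_2$, so $\mG_1=\diag(0,2)$ and $\mG_1(z,0)^T=0$ for every $z$. Hence $\xmlek=0$ for $k=1$, and the error is $s_1^2+s_2^2=5$ rather than $\sigma^2+1$ (unless $\sigma=2$).

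The paper's proof has the same hole, hidden in its opening ``without loss of generality $\Gk$ is diagonal''. Rotational invariance of $\vz_0$ removes $\mV$ from the law of $\vx_0$, but not from the estimator's constrained latent domain $\{\zkt\}$. In effect the paper proves the case $\mV=\mI$. You should state the needed hypothesis explicitly, e.g.\ that $\mV_{11}$ is invertible (true for generic $\mG$) or simply $\mV=\mI$.

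Under that hypothesis, the rest of your argument is correct and is a cleaner route than the paper's:
\begin{itemize}
\item The paper works in diagonal coordinates, writes the ridge solution in closed form, and expands bias, cross and noise terms via two Gaussian trace lemmas before setting $\gamma=0$.
\item You identify $\xmlek=\mU_k\mU_k^T\vy$ directly and handle $\mU$ without any rotation argument. Your cross term even vanishes pointwise, since $\Pk\bm{\eta}\perp\Pko\vx_0$, so independence is not needed there.
\end{itemize}
The paper's computation does give the general-$\gamma$ MAP formula in one pass. There, however, even invertibility of $\mV_{11}$ is not enough. The penalty $\|\vz^k\|^2$ is preserved under $\vz^k\mapsto\mV_{11}^T\vz^k$ only when $\mV_{11}$ is orthogonal, i.e.\ when $\mV$ is block diagonal.
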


The exact expression for reconstruction error in Theorem \ref{thm:mse_k} makes it possible to analytically find the optimal value of the tunable complexity parameter $k$, as established in the following corollary.

\begin{corollary}
	\label{cor:low_k}

	Under the assumptions of Theorem~\ref{thm:mse_k},
	suppose additionally that $s_1 \geq \sigma$.
	Then the reconstruction error is minimized by
	\begin{align}
		\argmin_{k \in [n]} \E_{\vx_0, \bm{\eta}} \| \xmlek- \vx_{0} \|^2 = \max \Bigl\{ k \mid  s_k \geq \sigma \Bigr\}. \nonumber
	\end{align}
\end{corollary}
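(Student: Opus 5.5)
We plan to use Theorem~\ref{thm:mse_k} directly: define $f(k) \defeq k\sigma^2 + \sum_{j=k+1}^{n} s_j^2$ for $k \in [n]$, so that $f(k)$ equals the expected reconstruction error. Minimizing the error over $k$ is then a discrete optimization of an explicit function, and the argument reduces to studying its forward differences:
\[
f(k+1) - f(k) = \sigma^2 - s_{k+1}^2, \qquad k = 1,\dots,n-1 .
\]

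The key steps are as follows.

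\textbf{Step 1.} Since the singular values are nonincreasing, $s_1 \ge s_2 \ge \cdots \ge s_n > 0$, the increments $\sigma^2 - s_{k+1}^2$ are nondecreasing in $k$. Hence $f$ is discretely convex on $[n]$.

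\textbf{Step 2.} Set $k^\star \defeq \max\{k \mid s_k \ge \sigma\}$. The hypothesis $s_1 \ge \sigma$ guarantees that this set is nonempty, so $k^\star \in [n]$ is well defined. This is the only role of the hypothesis, which avoids the degenerate case where every $s_k<\sigma$ and the minimizer would be $k=1$ by default rather than by the formula.

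\textbf{Step 3.} Show that $k^\star$ is a minimizer by examining the two sides.
\begin{itemize}
\item For $k < k^\star$, monotonicity gives $s_{k+1} \ge s_{k^\star} \ge \sigma$, so $f(k+1) - f(k) \le 0$. Thus $f$ is nonincreasing up to $k^\star$.
\item For $k \ge k^\star$ with $k \le n-1$, the maximality of $k^\star$ gives $s_{k+1} < \sigma$, so $f(k+1) - f(k) > 0$. Thus $f$ is strictly increasing after $k^\star$.
\end{itemize}
Together these give $f(k^\star) \le f(k)$ for all $k \in [n]$.

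The main subtlety is ties. If $s_k = \sigma$ for some $k \le k^\star$, the corresponding increment is zero, so the minimizer is not unique. For instance, with $s_{k^\star} = \sigma$ we get $f(k^\star - 1) = f(k^\star)$. The statement should therefore be read as saying that $k^\star$ is the largest element of the argmin set, or equivalently that it lies in the argmin. Uniqueness holds exactly when no $s_k$ equals $\sigma$.

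We will state this caveat and then give the strict inequalities:
\begin{itemize}
\item for $k > k^\star$, strict increase gives $f(k) > f(k^\star)$;
\item for $k < k^\star$, we have $f(k) \ge f(k^\star)$, with strict inequality whenever $s_{k^\star} > \sigma$.
\end{itemize}
This establishes the displayed identity, interpreting the argmin as the largest minimizer.
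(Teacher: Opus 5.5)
Your proof is correct and follows essentially the same route as the paper, which (working with general $\gamma$ and then setting $\gamma=0$) characterizes the candidate set $C=\{k \mid E(k-1)\ge E(k)\}$ through the sign of $s_k^2-\sigma^2$ and shows $E$ is nonincreasing on $C$ by a telescoping sum; your explicit strict-increase argument beyond $k^\star$ and your caveat about ties (reading the argmin as its largest element) make the argument slightly more complete than the paper's. One small correction to that caveat: non-uniqueness occurs exactly when $k^\star\ge 2$ and $s_{k^\star}=\sigma$, so the case $s_1=\sigma>s_2$ still has the unique minimizer $k=1$, contrary to your claim that uniqueness holds ``exactly when no $s_k$ equals $\sigma$''.
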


The corollary shows that the optimal model retains only singular directions above the noise level, so higher noise leads to a smaller optimal $k$. Proofs are provided in Appendix \cref{app:proofs}, along with the maximum \textsl{a posteriori} formulation.

\section{Experiments}
Our experiments underscore several key findings.
First, we demonstrate that a single generative model can be trained to operate across a wide range of latent dimensionalities, enabling the same model to be meaningfully tuned at inference time rather than retrained for each complexity level (\cref{fig:fid_both}).
Second, in the denoising setting, we empirically validate the theoretical findings of Section~\ref{sec:theory}, which identify an intermediate optimal latent complexity in the presence of additive noise (\cref{fig:nf_flow_ushape}).
Third, we show that the observed non-monotonic dependence on latent dimensionality generalizes across multiple inverse problems, datasets, and generative architectures, including latent diffusion models, variational autoencoders, and normalizing flows (\cref{fig:tunable_CELBA,fig:nf_flow_ushape,fig:vae_flow_ushape}).
Finally, we demonstrate that incorporating tunability can complement recent advances in inversion algorithms, with especially large gains for methods that explicitly optimize for measurement consistency, such as ReSample~\cite{song2024solving}, flow-based MAP estimation~\cite{NEURIPS2024_698570ae}, and Algorithm~\ref{algo:posterior}, and more mixed effects for gradient-guided posterior-sampling methods~\citep{patel2024flowchef,zhang2025improving,rout2023solving} (\cref{tab:celebahq,tab:ffhq,tab:img_net}).
We include both positive and negative paired comparisons in these broader solver experiments to show where tunability is most useful, rather than to suggest that it should improve every inversion method uniformly. Training, evaluation, inverse problem implementations, configurations, and environment instructions are available in our public \href{https://github.com/g33sean/Latent-Generative-Models-with-Tunable-Complexity-for-Compressed-Sensing-and-other-Inverse-Problems}{GitHub repository}.

\paragraph{Datasets and Metrics.}
We evaluate on five datasets spanning different resolutions: CelebA~\citep{liu2015faceattributes}, CelebA-HQ~\citep{karras2018progressive}, ImageNet~\citep{deng2009imagenet}, and MS COCO~\citep{lin2015microsoftcococommonobjects} at $64 \times 64 \times 3$, and FFHQ~\citep{karras2019style} at $256 \times 256 \times 3$. For generative quality, we report the Fréchet Inception Distance (FID $\downarrow$) computed on 50k samples. For inverse problems, we measure reconstruction fidelity using Peak Signal-to-Noise Ratio (PSNR $\uparrow$) and perceptual quality using LPIPS ($\downarrow$). Unless otherwise specified, reconstruction results are reported on a test set of 100 images. For the experiments reported in \cref{tab:ffhq,tab:img_net}, the latent dimensionality $k$ was selected using a held-out validation set of 12 images (see \ref{app:LDM} for details).

\paragraph{Baselines.}
We compare against representatives of three families of generative priors for inverse problems.
Flow matching: Flow CHEF~\citep{patel2024flowchef} and Flow ICTM~\citep{NEURIPS2024_698570ae}.
Latent diffusion: PSLD~\citep{rout2023solving}, LatentDAPS~\citep{zhang2025improving}, and ReSample~\citep{song2024solving}. Normalizing flows: NF~\citep{asim2020invertible}, which performs MAP estimation in the flow's latent space. For each baseline, we additionally construct a \emph{tunable variant} that replaces the fixed latent prior with a tunable one while leaving the inversion procedure unchanged, isolating the effect of prior flexibility.

\begin{figure}[t!]
	\centering
	\includegraphics[width=1.0\textwidth,keepaspectratio]{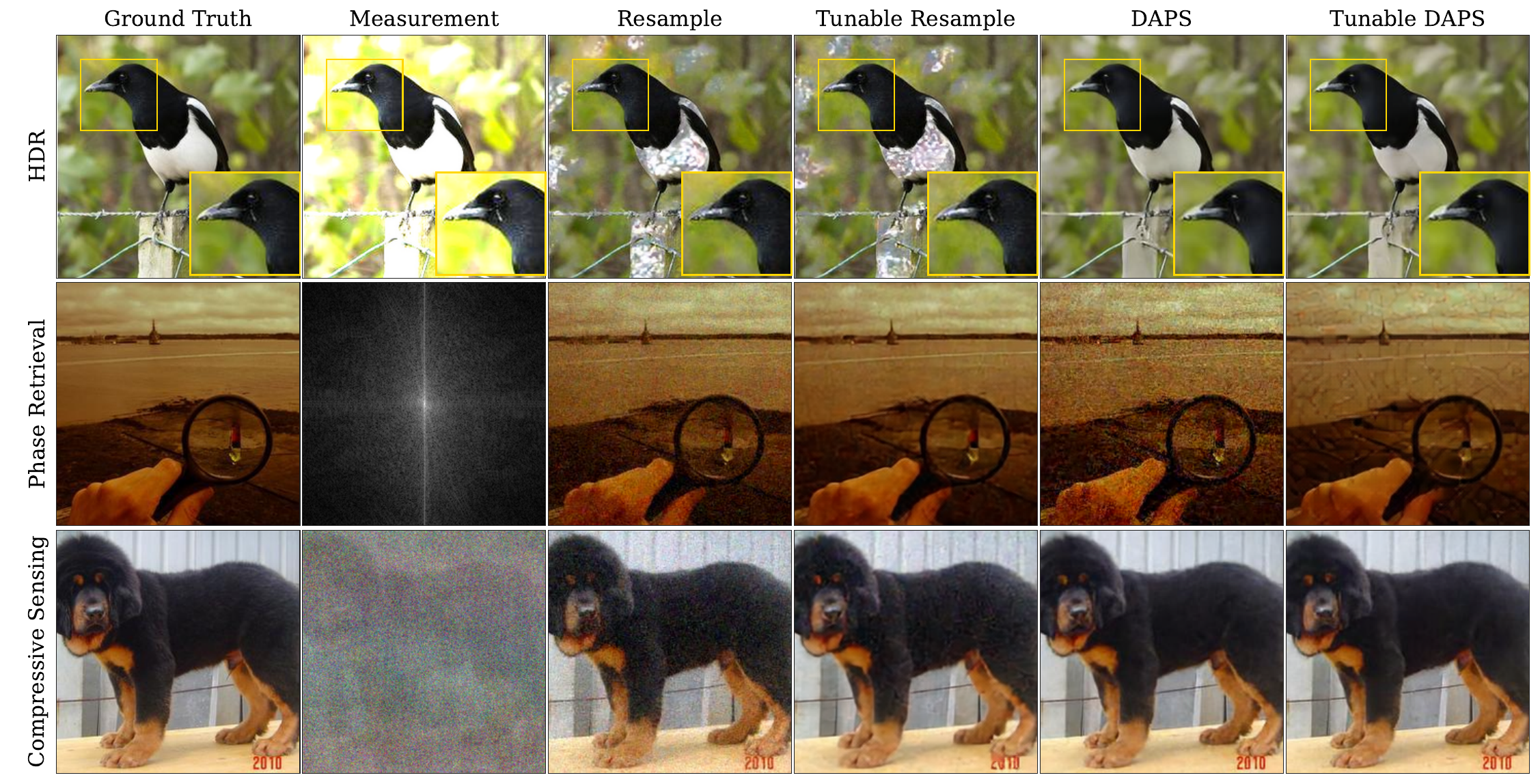}
	\caption{\textbf{Qualitative results on ImageNet across three inverse problems.} Rows: HDR (clipping with scale factor $2$), phase retrieval ($m/n = 12.5\%$), and compressed sensing ($m/n = 10\%$). Columns: ground truth, measurement ($\mathbf{A}^\top \mathbf{y}$ for CS and phase retrieval; clipped input for HDR), and reconstructions from ReSample~\citep{song2024solving} and LatentDAPS~\citep{zhang2025improving} alongside their tunable counterparts. Insets show enlarged views of the regions highlighted in yellow.}
	\label{fig:qualitative_inverse_problems}
\end{figure}

\paragraph{Implementation Details.}
We build upon the Diffusers package~\citep{von-platen-etal-2022-diffusers} for both the denoising U-Net and VAE architectures. All VAE models employ continuous latent spaces without quantized codebooks. For each dataset, the VAE was trained on the same data used for subsequent diffusion model training, with a held-out validation set used to evaluate LPIPS for model selection. For the LDM, we employ a U-Net in half-precision, following the architectural specifications of~\citep{Rombach_2022_CVPR}, and optimize using AdamW~\citep{loshchilov2019decoupled}. During training, we maintained a held-out set to evaluate FID, selecting the model with the lowest FID score for inverse problem applications. All models were trained on a single H200 GPU, and all inverse problems were solved on a single A100 (40\,GB). Additional implementation details are provided in \cref{app:LDM}.

\paragraph{Inverse Problem Details}
For \cref{fig:tunable_CELBA}, we evaluated four forward operators.  All experiments were done on $64 \times 64 \times 3$.
Compressed sensing used a random Gaussian operator $\mA \in \R^{m \times n}$ with i.i.d.\ $\gN(0, 1/m)$ entries, where $m=1228$ (approximately $10\%$ of $n=12288$).
Denoising was modeled as additive Gaussian noise $\bm{\eta} \sim \mathcal{N}(0, \sigma^2 \mI_n)$ with $\sigma = 0.2$.
Inpainting employed a binary mask $\mM$ with approximately $80\%$ missing pixels.
Phase retrieval used phaseless Gaussian measurements $\lvert \mA \vx \rvert$, sharing the same $\mA$ as compressed sensing, with a measurement ratio $m/n = 0.25$ ($25\%$).
All inverse problems were solved using Algorithm~\ref{algo:posterior}.

In \cref{tab:ffhq,tab:img_net}, we compare our approach to a contemporary state-of-the-art (SOTA) diffusion prior and show that our framework can also enhance other SOTA priors on certain inverse problems. We deliberately include settings in which our method does \emph{not} improve reconstruction error, to make clear that it is not intended as a universal one-shot solver.

We consider the following measurement operators: compressed sensing with measurement ratio $m/n = 10\%$; inpainting with $85\%$ of pixels missing uniformly at random; Gaussian deblurring with a $9 \times 9$ kernel of standard deviation $3$; bicubic $4\times$ super-resolution; and HDR, implemented as clipping with scale factor $2$. For phase retrieval, the forward map applies a Rademacher sign flip followed by circular convolution with a randomly drawn Gaussian filter, after which the signal is subsampled to $m$ measurements; we use $m/n = 15\%$ on ImageNet and $m/n = 12.5\%$ on FFHQ. Across all settings, we add Gaussian measurement noise with standard deviation $\sigma = 0.05$.

\begin{table}[t!]
	\centering
	\scriptsize
	\setlength{\tabcolsep}{1.5pt}
	\renewcommand{\arraystretch}{1}
	\newcommand{\na}{\multicolumn{2}{c}{---}}
	\resizebox{1.0\textwidth}{!}{%
		\begin{tabular}{@{}l*{10}{c}@{}}
			\toprule
			\multirow{2}{*}{\textbf{Method}}
			 & \multicolumn{2}{c}{\textbf{Compressed Sensing }}
			 & \multicolumn{2}{c}{\textbf{Inpainting}}
			 & \multicolumn{2}{c}{\textbf{Super Resolution }}
			 & \multicolumn{2}{c}{\textbf{Phase Retrieval}}
			 & \multicolumn{2}{c}{\textbf{HDR}}                                                                                                                                                                         \\
			\cmidrule(lr){2-3}
			\cmidrule(lr){4-5}
			\cmidrule(lr){6-7}
			\cmidrule(lr){8-9}
			\cmidrule(lr){10-11}
			 & PSNR$\uparrow$                                   & LPIPS$\downarrow$
			 & PSNR$\uparrow$                                   & LPIPS$\downarrow$
			 & PSNR$\uparrow$                                   & LPIPS$\downarrow$
			 & PSNR$\uparrow$                                   & LPIPS$\downarrow$
			 & PSNR$\uparrow$                                   & LPIPS$\downarrow$                                                                                                                                     \\
			\midrule
			Flow CHEF~\citep{patel2024flowchef}
			 & \textbf{26.31}                                   & \textbf{0.36}     & 24.53          & 0.42          & 24.07          & 0.52          & \na            & \na                                            \\
			Tunable Flow CHEF (Ours)
			 & 26.24                                            & \textbf{0.36}     & \textbf{24.76} & \textbf{0.39} & \textbf{24.41} & \textbf{0.49} & \na            & \na                                            \\
			\midrule
			Flow ICTM~\citep{NEURIPS2024_698570ae}
			 & 22.98                                            & 0.57              & 21.62          & 0.58          & 20.91          & 0.59          & \na            & \na                                            \\
			Tunable Flow ICTM (Ours)
			 & \textbf{26.13}                                   & \textbf{0.41}     & \textbf{22.60} & \textbf{0.51} & \textbf{21.38} & \textbf{0.56} & \na            & \na                                            \\
			\midrule
			LatentDAPS~\citep{zhang2025improving}
			 & \textbf{28.5}                                    & \textbf{0.27}     & 24.21          & 0.35          & \textbf{26.68} & \textbf{0.27} & \textbf{29.32} & \textbf{0.23} & \textbf{19.79} & \textbf{0.33} \\
			Tunable LatentDAPS (Ours)
			 & 28.31                                            & 0.29              & \textbf{24.50} & \textbf{0.34} & 26.61          & \textbf{0.27} & \textbf{29.32} & \textbf{0.23} & 19.36          & 0.34          \\
			\midrule
			PSLD~\citep{rout2023solving}
			 & 24.92                                            & 0.37              & 17.12          & 0.59          & 21.95          & 0.46          & \na            & \na                                            \\
			Tunable PSLD (Ours)
			 & \textbf{25.41}                                   & \textbf{0.34}     & \textbf{17.55} & \textbf{0.57} & \textbf{22.36} & \textbf{0.43} & \na            & \na                                            \\
			\midrule
			ReSample~\citep{song2024solving}
			 & 24.49                                            & 0.51              & 22.26          & 0.48          & 22.43          & 0.44          & 22.68          & 0.58          & 20.53          & 0.38          \\
			Tunable ReSample (Ours)
			 & \textbf{28.43}                                   & \textbf{0.28}     & \textbf{24.84} & \textbf{0.39} & \textbf{24.53} & \textbf{0.39} & \textbf{28.51} & \textbf{0.29} & \textbf{20.70} & \textbf{0.36} \\
			\bottomrule
		\end{tabular}%
	}
	\captionsetup{width=1.0\textwidth}
	\caption{\textbf{Quantitative comparison on FFHQ across five inverse problems.}
		Mean PSNR~$\uparrow$ and LPIPS~$\downarrow$ are reported over 100 test images.
		Tunable priors are compared against their fixed-complexity counterparts using the same
		inversion method; bold indicates the better result within each matched pair, and dashes
		denote settings not evaluated.}
	\label{tab:ffhq}
\end{table}

\begin{table}[t!]
	\centering
	\scriptsize
	\setlength{\tabcolsep}{1.5pt}
	\renewcommand{\arraystretch}{1}
	\newcommand{\na}{\multicolumn{2}{c}{---}}
	\resizebox{1.0\textwidth}{!}{%
		\begin{tabular}{@{}l*{10}{c}@{}}
			\toprule
			\multirow{2}{*}{\textbf{Method}}
			 & \multicolumn{2}{c}{\textbf{Compressed Sensing }}
			 & \multicolumn{2}{c}{\textbf{Inpainting}}
			 & \multicolumn{2}{c}{\textbf{Super Resolution }}
			 & \multicolumn{2}{c}{\textbf{Phase Retrieval}}
			 & \multicolumn{2}{c}{\textbf{HDR}}                                                                                                                                                                         \\
			\cmidrule(lr){2-3}
			\cmidrule(lr){4-5}
			\cmidrule(lr){6-7}
			\cmidrule(lr){8-9}
			\cmidrule(lr){10-11}
			 & PSNR$\uparrow$                                   & LPIPS$\downarrow$
			 & PSNR$\uparrow$                                   & LPIPS$\downarrow$
			 & PSNR$\uparrow$                                   & LPIPS$\downarrow$
			 & PSNR$\uparrow$                                   & LPIPS$\downarrow$
			 & PSNR$\uparrow$                                   & LPIPS$\downarrow$                                                                                                                                     \\
			\midrule
			LatentDAPS~\citep{zhang2025improving}
			 & 28.01                                            & \textbf{0.24}     & 22.65          & 0.48          & 21.15          & 0.54          & 23.67          & 0.43          & 18.84          & 0.32          \\
			Tunable LatentDAPS (Ours)
			 & \textbf{28.12}                                   & \textbf{0.24}     & \textbf{22.99} & \textbf{0.42} & \textbf{21.17} & \textbf{0.50} & \textbf{23.96} & \textbf{0.38} & \textbf{18.96} & \textbf{0.31} \\
			\midrule
			PSLD~\citep{rout2023solving}
			 & 18.61                                            & 0.61              & 18.82          & 0.63          & 18.56          & 0.64          & \na            & \na                                            \\
			Tunable PSLD (Ours)
			 & \textbf{19.39}                                   & \textbf{0.60}     & \textbf{19.25} & \textbf{0.61} & \textbf{19.34} & \textbf{0.61} & \na            & \na                                            \\
			\midrule
			ReSample~\citep{song2024solving}
			 & 23.74                                            & 0.47              & 19.55          & 0.49          & 18.82          & 0.55          & 23.00          & 0.49          & 20.62          & 0.30          \\
			Tunable ReSample (Ours)
			 & \textbf{25.86}                                   & \textbf{0.36}     & \textbf{22.19} & \textbf{0.43} & \textbf{21.62} & \textbf{0.47} & \textbf{26.49} & \textbf{0.33} & \textbf{20.71} & \textbf{0.29} \\
			\bottomrule
		\end{tabular}%
	}
	\captionsetup{width=1.0\textwidth}
	\caption{\textbf{Quantitative comparison on ImageNet across five inverse problems.}
		Mean PSNR~$\uparrow$ and LPIPS~$\downarrow$ are reported over 100 test images.
		Tunable variants are evaluated against the corresponding fixed-complexity baselines,
		isolating the effect of latent-space tunability; bold indicates the better result within
		each matched pair, and dashes denote settings not evaluated.}
	\label{tab:img_net}
\end{table}
\FloatBarrier

\section{Conclusion and Limitations}
\label{sec:conclusion}

Prior work on inverse problems with generative priors has primarily focused on fixed-complexity models, with most effort directed at inversion algorithms. In this work, we study latent tunable priors as a complementary axis of improvement: rather than changing the inversion procedure, we vary the effective latent dimensionality of the prior, giving practitioners an additional degree of freedom with minimal training overhead and no increase in the number of model parameters. Using nested dropout as an existing mechanism for learning ordered latent representations, we demonstrate that tunability can improve reconstruction error across several model classes and inverse problems, with intermediate latent dimensions often outperforming both low- and high-complexity extremes. Our goal is not to introduce nested dropout as a new method, but to use it to test the hypothesis that controlling latent dimensionality can meaningfully improve inverse-problem performance. The gains are largest for inversion methods that solve a measurement-consistency minimization problem at each step: ReSample~\citep{song2024solving}, flow-based MAP~\citep{NEURIPS2024_698570ae}, and \cref{algo:posterior}. LatentDAPS~\citep{zhang2025improving} is less consistent, which we observed empirically.

Several limitations point to important directions for future work. Our theory currently covers only linear generators under additive Gaussian noise; its value is mainly in suggesting that analogous criteria for selecting the optimal latent dimension $k$ may be possible for a richer model class. We focus on latent priors, where complexity is naturally controlled by the number of retained coordinates, while extending tunability to pixel-space architectures would require a different mechanism. We also treat the latent dimensionality $k$ as a solver hyperparameter, selecting a single value for each dataset, inverse problem, and noise level using a small held-out validation set with ground truth. While this avoids any use of ground truth at test time, it assumes representative validation data; developing measurement-only or instance-adaptive selection rules for $k$ remains an important open problem. Finally, our experiments are limited to resolutions up to $256 \times 256$, leaving open how latent tunability scales to higher-resolution data and more complex reconstruction tasks.

\section*{Acknowledgments}
SG and PH acknowledge support from NSF Awards DMS-1848087 and DMS-2022205.

\bibliography{ref.bib}
\bibliographystyle{unsrtnat}
\newpage
\appendix
\section{Appendix}
All FID scores reported in this appendix are computed with the \texttt{clean-fid} toolkit~\citep{Parmar_2022_CVPR}.
\subsection{VAE Training/Inversion}
Our VAE follows the \texttt{AutoencoderKL} architecture from~\citet{von-platen-etal-2022-diffusers}. Both VAEs are trained as described in the main text with the regularization weight $\lambda$ fixed at $0.1$ and a nested-dropout success probability of $10^{-3}$, and use a VGG-based perceptual loss to encourage perceptual fidelity. For inversion, we use gradient descent on the latent vector, following the reference implementation accompanying~\citet{bora2017compressed}.

\textbf{CelebA.} A latent dimensionality of $16\times 16\times 16$ is used, yielding a model with approximately $44$M parameters.

\textbf{MS COCO.} A latent dimensionality of $24\times 16\times 16$ is used.

\begin{figure}[h!]
	\centering
	\begin{subfigure}[b]{0.48\textwidth}
		\centering
		\includegraphics[width=\textwidth]{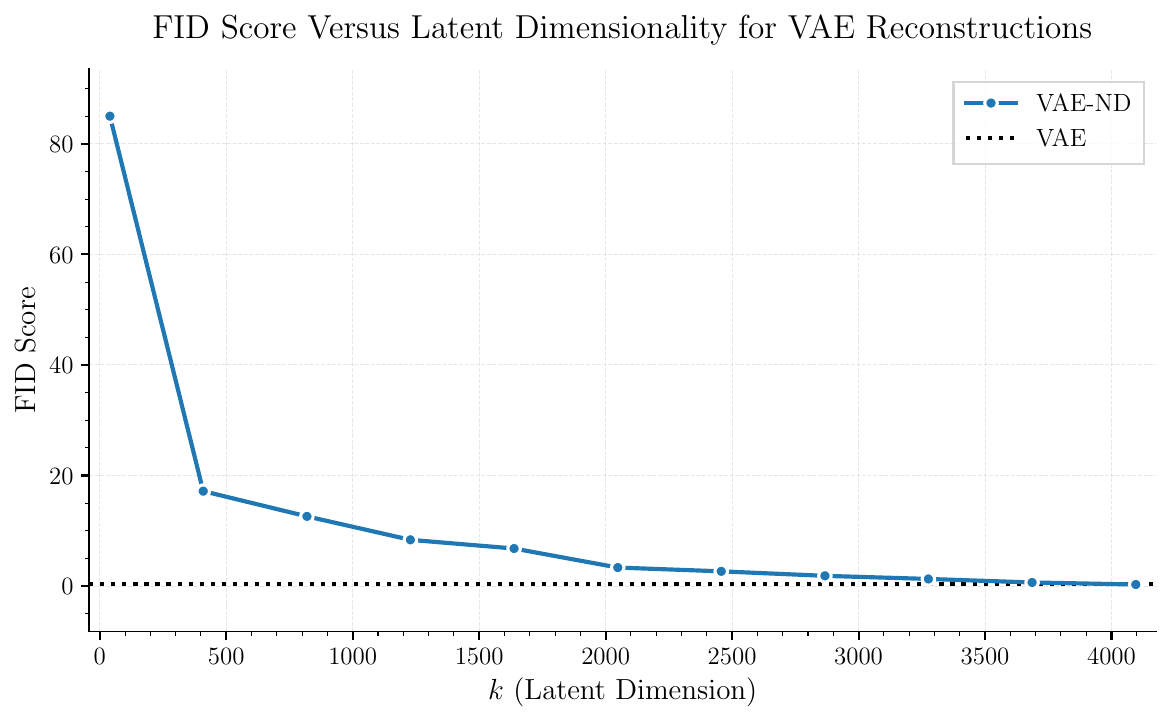}
		\caption{CelebA dataset}
		\label{fig:ldm_train_k_celeba}
	\end{subfigure}
	\hfill
	\begin{subfigure}[b]{0.48\textwidth}
		\centering
		\includegraphics[width=\textwidth]{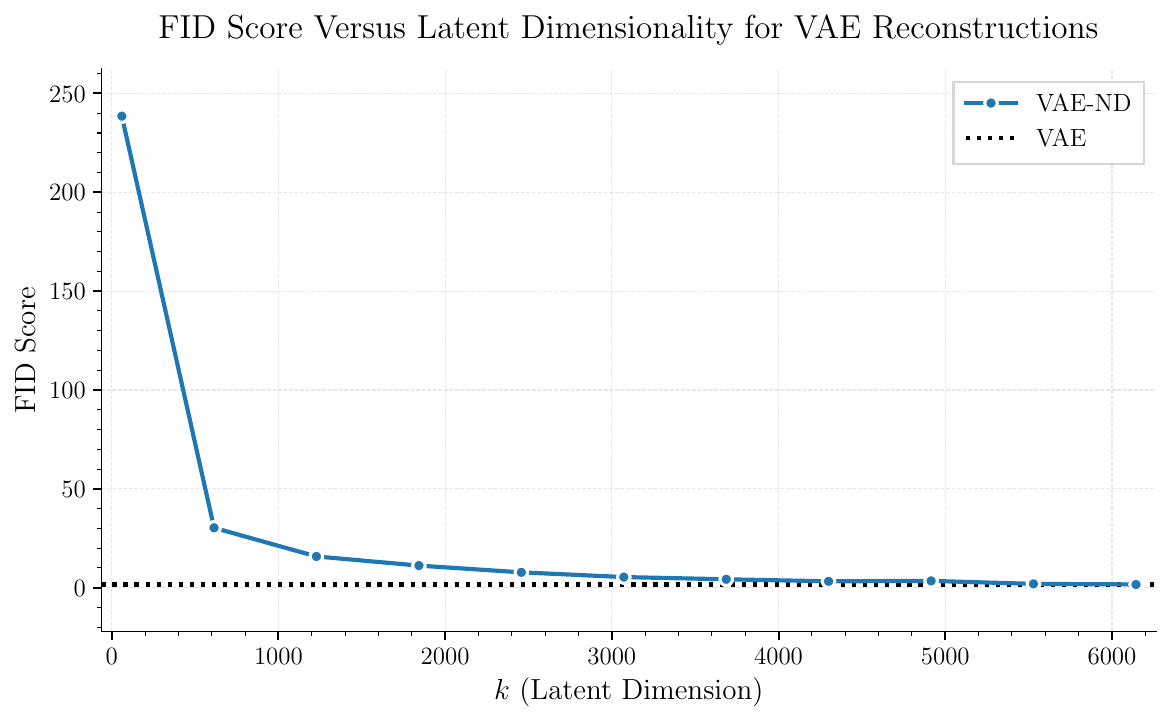}
		\caption{MS COCO dataset}
		\label{fig:ldm_train_k_coco}
	\end{subfigure}
	\caption{Reconstruction FID as a function of latent dimensionality $k$.}
\end{figure}

\begin{figure}[h!]
	\centering
	\includegraphics[width=1.0\textwidth,height=0.3\linewidth,keepaspectratio]{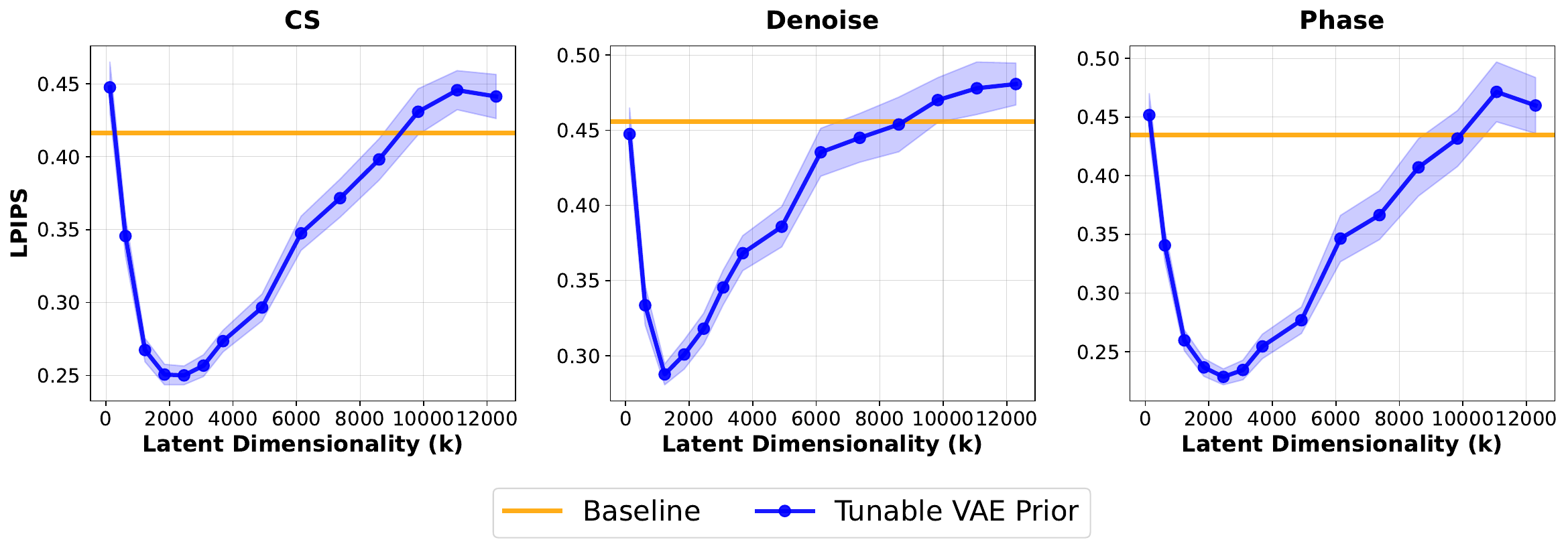}
	\caption{
		Performance of a generative prior with tunable complexity (Tunable VAE Prior) compared to its fixed-complexity counterpart (VAE) on compressed sensing and denoising tasks on the CelebA dataset. The tunable prior achieves improved reconstructions over a range of latent dimensionalities $k$, as measured by LPIPS, relative to the fixed-complexity baseline.
	}
	\label{fig:vae_flow_ushape}
\end{figure}

\vspace{2em}
\subsection{LDM Training/Inversion}
\label{app:LDM}

\begin{algorithm}[t!]
	\caption{Tunable Posterior Sampling for Latent Diffusion (Simplified Template)}
	\label{algo:posterior}
	\begin{algorithmic}[1]
		\State \textbf{Input:} $\vy$, $\gA$, $\gD$, $\beps_\theta$, steps $T$, schedule $\{\bar\alpha_t\}_{t=0}^T$, parameter $k$
		\Statex \hspace{\algorithmicindent} Reverse sampler (DDIM or DDPM), DDIM $\eta\in[0,1]$, correction scales $\tau_t>0$
		\State \textbf{Output:} $\gD(\vz_0)$
		\State $\vz_T\sim\mathcal{N}(0,\mI_d)$; $\vz_T\gets(\vz_T)_{\downarrow k}$
		\For{$t=T,\ldots,1$}
		\State $\hat{\beps}_t\gets\beps_\theta(\vz_t,t)$
		\State $\hat{\vz}_0\gets(\vz_t-\sqrt{1-\bar\alpha_t}\,\hat{\beps}_t)/\sqrt{\bar\alpha_t}$
		\State $\beps'_t\sim\mathcal{N}(0,\mI_d)$
		\If{reverse sampler is DDIM}
		\State $\sigma_t\gets\eta\sqrt{\frac{1-\bar\alpha_{t-1}}{1-\bar\alpha_t}(1-\alpha_t)}$
		\State $\vz'_{t-1}\gets\sqrt{\bar\alpha_{t-1}}\hat{\vz}_0+\sqrt{1-\bar\alpha_{t-1}-\sigma_t^2}\hat{\beps}_t+\sigma_t\beps'_t$
		\Else \Comment{DDPM}
		\State $\vz'_{t-1}\gets\frac{\sqrt{\alpha_t}(1-\bar\alpha_{t-1})}{1-\bar\alpha_t}\vz_t
		+\frac{\sqrt{\bar\alpha_{t-1}}\beta_t}{1-\bar\alpha_t}\hat{\vz}_0+\sigma_t^{\text{DDPM}}\beps'_t$
		\EndIf
		\State Initialize $\vz$ at $\vz'_{t-1}$ and approximately minimize over $\vz\in\R^d$:
		\Statex \hspace{\algorithmicindent} $\displaystyle
		\|\vy-\gA(\gD(\vz))\|_2^2+\frac{\|\vz-\vz'_{t-1}\|_2^2}{2\tau_t^2}$
		\State $\vz_{t-1}\gets$ the resulting approximate minimizer
		\State $\vz_{t-1}\gets(\vz_{t-1})_{\downarrow k}$
		\EndFor
		\State \textbf{return} $\gD(\vz_0)$
	\end{algorithmic}
\end{algorithm}

Here $(\cdot)_{\downarrow k}$ retains the first $k$ latent coordinates and zeros the rest, $\gD$ is the autoencoder's decoder, and $\beps_\theta$ is the trained noise-prediction network. We use $\bar\alpha_0=1$ and the previous-selected-timestep convention described in \cref{subsec:IP_TP}; $\sigma_t^{\text{DDPM}}$ is defined in \cref{sub:DM}. The positive correction scales $\tau_t$ are supplied separately from the sampling standard deviations, so the objective remains finite when the sampling variance is zero, including at the terminal step. \Cref{algo:posterior} is a simplified mathematical template with approximate optimization initialized from the reverse proposal.

Following~\citet{von-platen-etal-2022-diffusers}, all small-scale diffusion priors used in this section share a common training recipe: a \texttt{UNet2DModel} with approximately $200$M parameters, optimized with AdamW at learning rate $10^{-4}$ and batch size $128$ under the \texttt{DDPMScheduler}. We compute FID with the \texttt{DPMSolverMultistepScheduler}~\citep{lu2022dpm} at $100$ sampling steps, evaluating every $20{,}000$ iterations and retaining the three checkpoints with the lowest validation FID; the best-FID checkpoint is used for all subsequent inverse-problem evaluations.

\textbf{CelebA.} Training-set FID is computed over $50{,}000$ images and test-set FID over $10{,}000$ unseen images from the PyTorch CelebA split.

\textbf{CIFAR-10 and MS COCO.} The same training recipe and FID protocol are used, applied to each dataset's standard train/test partitions.

\begin{figure}[h!]
	\centering
	\begin{subfigure}[b]{0.48\textwidth}
		\centering
		\includegraphics[width=\textwidth]{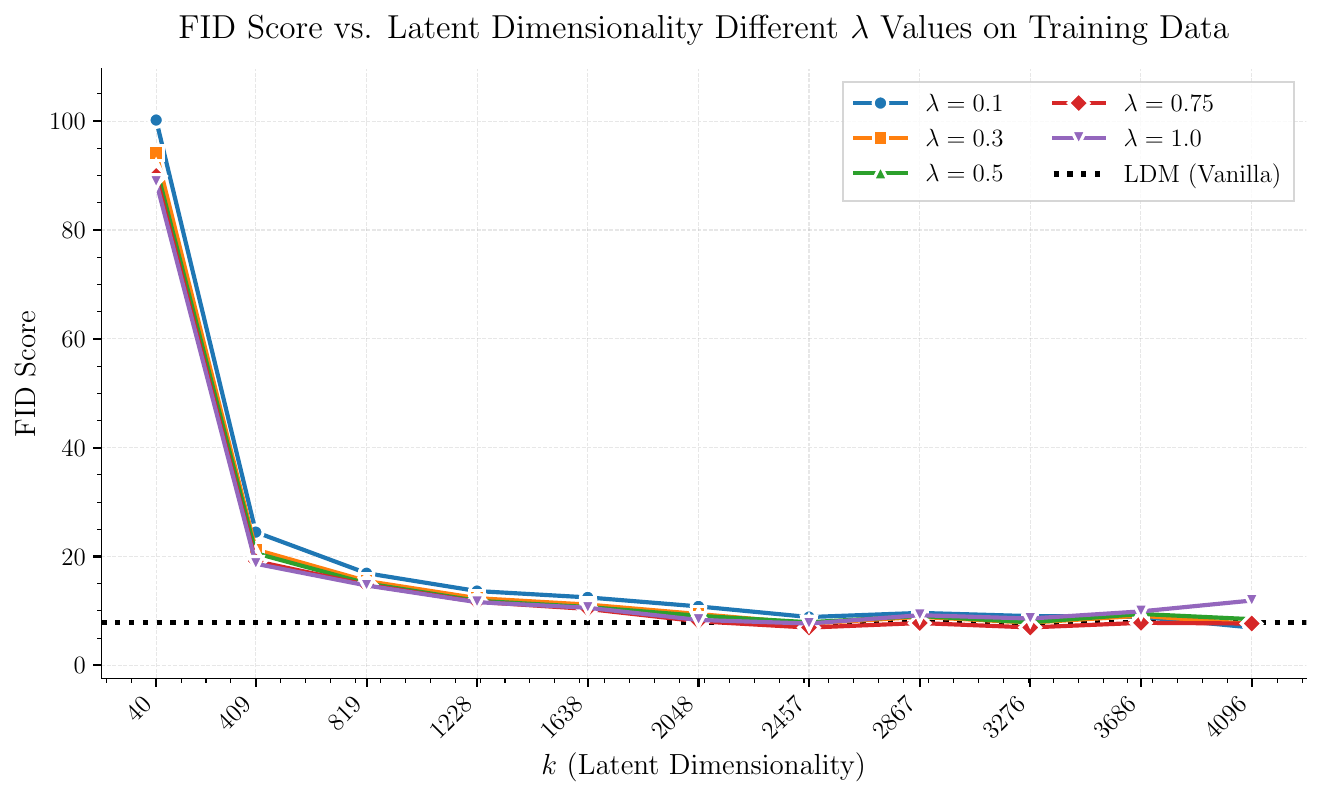}
		\caption{Training set CelebA}
		\label{fig:ldm_train_k}
	\end{subfigure}
	\hfill
	\begin{subfigure}[b]{0.48\textwidth}
		\centering
		\includegraphics[width=\textwidth]{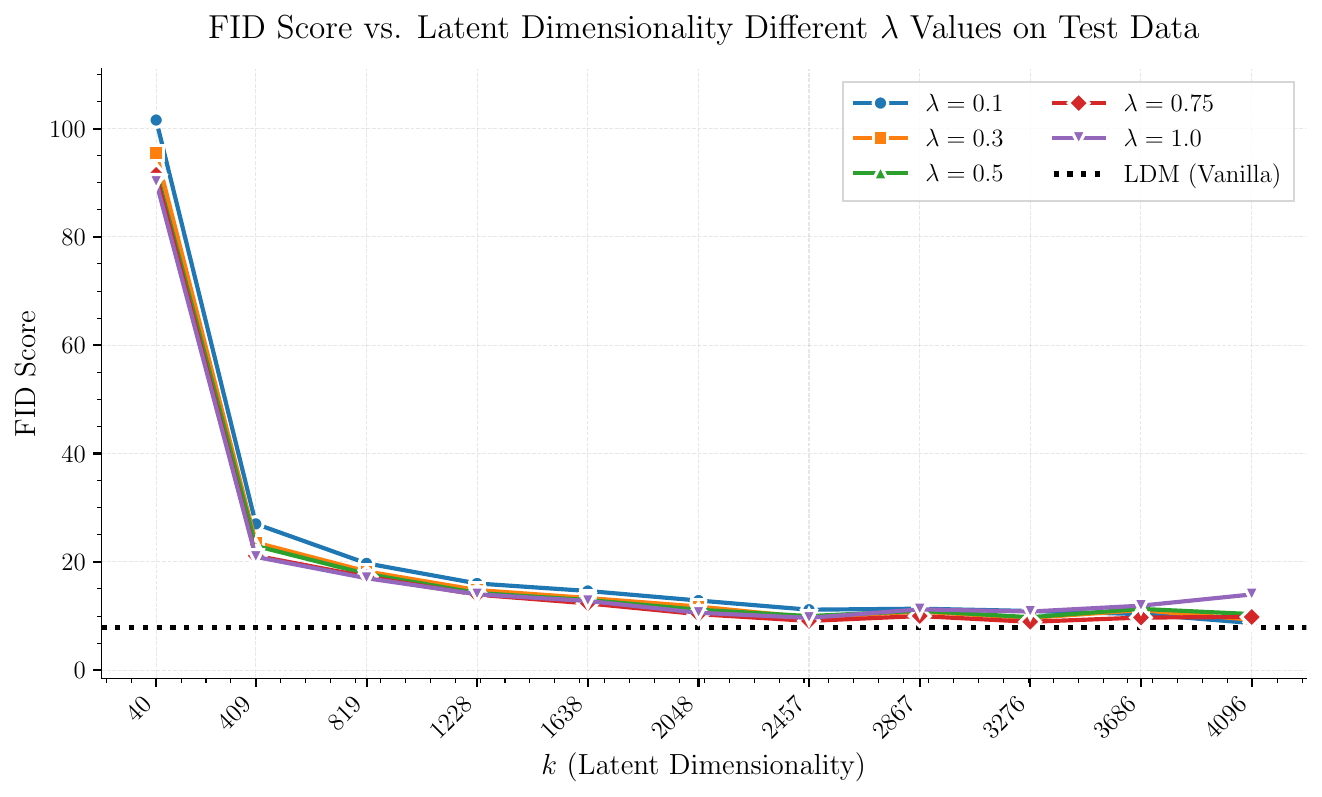}
		\caption{Test set CelebA}
		\label{fig:ldm_test_k}
	\end{subfigure}
	\caption{FID score as a function of latent dimensionality $k$.}
	\label{fig:ldm_k_comparison}
\end{figure}

For the inversion algorithms DPS~\citep{chung2023diffusion} and LDPS/PSLD~\citep{rout2023solving}, we used the official codebases but adapted them to the Diffusers framework, which already provides existing pipelines. All hyperparameters were selected by grid search over the configuration files provided on GitHub, and we report results using the best-performing settings. Our \href{https://github.com/g33sean/Latent-Generative-Models-with-Tunable-Complexity-for-Compressed-Sensing-and-other-Inverse-Problems}{public code repository} provides training, evaluation, inverse-problem implementations, configurations, and environment instructions. In our experiments, DDIM sampling consistently yielded stronger reconstruction quality at intermediate latent dimensionalities compared to stochastic samplers, though a full theoretical understanding of this effect remains an open question. For \cref{algo:posterior}, the best-performing setup used the Adam optimizer~\citep{KingBa15} with the DDIM reverse equation, 3 optimization steps per update, 500 reverse steps, and DDIM stochasticity parameter $\eta=0.1$. In these experiments, a small amount of DDIM stochasticity improved reconstruction: setting $\eta=0$, \ie, deterministic DDIM sampling, led to unstable or degraded reconstructions. This approach is particularly effective in our setting due to the relatively large latent space compared to the image space. The overall objective was to implement a straightforward inversion algorithm that can be readily applied across a wide range of problems.

\paragraph{FFHQ and ImageNet Model Configurations}
For the FFHQ and ImageNet experiments we train two diffusion U-Nets that operate in a shared $24\times 64\times 64$ latent space produced by a KL autoencoder ($256\times 256$ RGB images, $4\times$ spatial downsampling). The autoencoder is trained for $100{,}000$ steps at batch size $32$ with a VGG-based perceptual loss, a small KL weight of $10^{-6}$, an adversarial discriminator (weight $0.1$, ramped in over the first $300$ steps), and a nested-dropout regularizer with strength $\lambda=0.1$. Both diffusion models are trained for $1000$ DDPM timesteps with min-SNR-$\gamma$ weighting ($\gamma=5$), AdamW at learning rate $10^{-4}$, gradient accumulation of $2$, and a linear warmup followed by cosine decay; we sample with the \texttt{DPMSolverMultistepScheduler} for FID evaluation and with DDIM for inverse problems.

\textbf{FFHQ.} The FFHQ U-Net has approximately $520$M parameters and is used for \emph{both} the tunable and fixed-complexity (baseline) prior; the two variants share an identical architecture and differ only in whether nested-dropout is applied to the latent during training. The network has four resolution stages ($64{\rightarrow}32{\rightarrow}16{\rightarrow}8$) with channel widths $(320, 640, 960, 1280)$, two residual blocks per stage, attention at every stage with head dimension $64$, and a scale--shift time embedding. We train for $80{,}000$ optimizer steps at a per-GPU batch size of $100$ (effective batch $200$ with gradient accumulation) with $10{,}000$ warmup steps, achieving an FID of $8$ on a held-out validation set of $5{,}000$ images.

\textbf{ImageNet.} The ImageNet U-Net has approximately $720$M parameters and is class-conditional over the $1000$ ImageNet classes plus a learned null embedding to support classifier-free guidance~\citep{ho2022classifier}. At training time we replace the class label with the null token with probability $0.1$; at sampling time we use a guidance scale of $4.0$. The network has six resolution stages ($64{\rightarrow}32{\rightarrow}16{\rightarrow}8{\rightarrow}4{\rightarrow}2$) with channel widths $(320, 320, 640, 640, 1280, 1280)$; the top two stages use plain residual blocks while the bottom four add attention with head dimension $64$, and each stage has two residual blocks. We train for $120{,}000$ optimizer steps at a per-GPU batch size of $160$ (effective batch $320$ with accumulation) with $5{,}000$ warmup steps, achieving an FID of $18$ on a held-out validation set of $5{,}000$ images. Both models are trained from scratch on a single NVIDIA H200, and all inverse-problem inference is run on a single NVIDIA A100 ($40$\,GB).

\paragraph{Tunable Priors on Various Datasets}
Each problem is solved on $5$ images randomly drawn from the held-out test set with a batch size of $12$. The blue shaded region represents one standard deviation.

\begin{center}
	\begin{minipage}{\textwidth}
		\centering
		\includegraphics[width=0.7\linewidth]{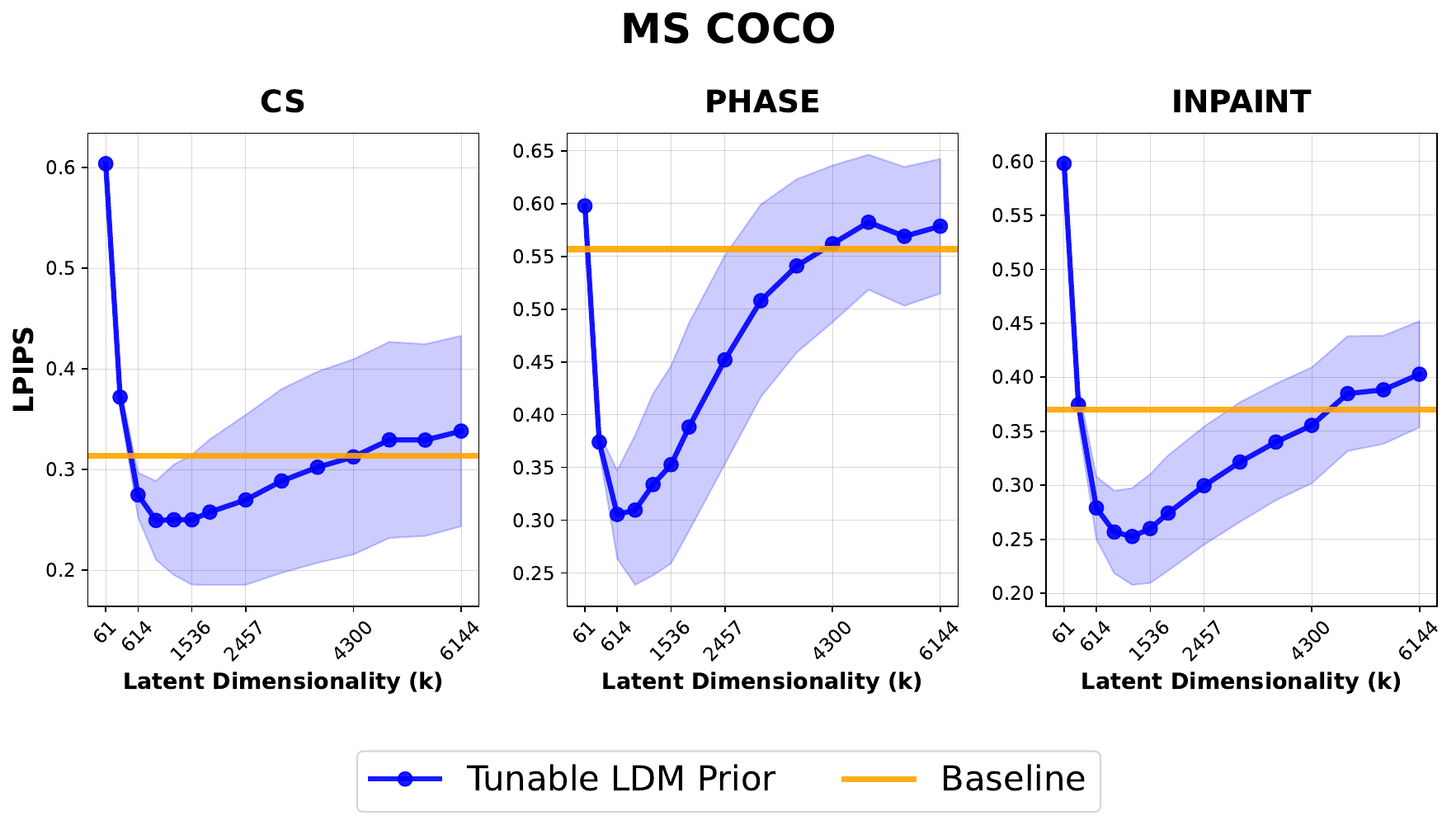}
		\captionof{figure}{Performance of a generative prior with tunable complexity (Tunable LDM Prior) and its fixed-complexity counterpart (Baseline) for compressed sensing and inpainting on MS COCO. The tunable prior demonstrates a range of parameters $k$ that yield better LPIPS scores than the baseline.}
		\label{fig:ms_coco_ushape}
	\end{minipage}
\end{center}
\vspace{1em}
\begin{center}
	\begin{minipage}{\textwidth}
		\centering
		\includegraphics[width=0.7\linewidth]{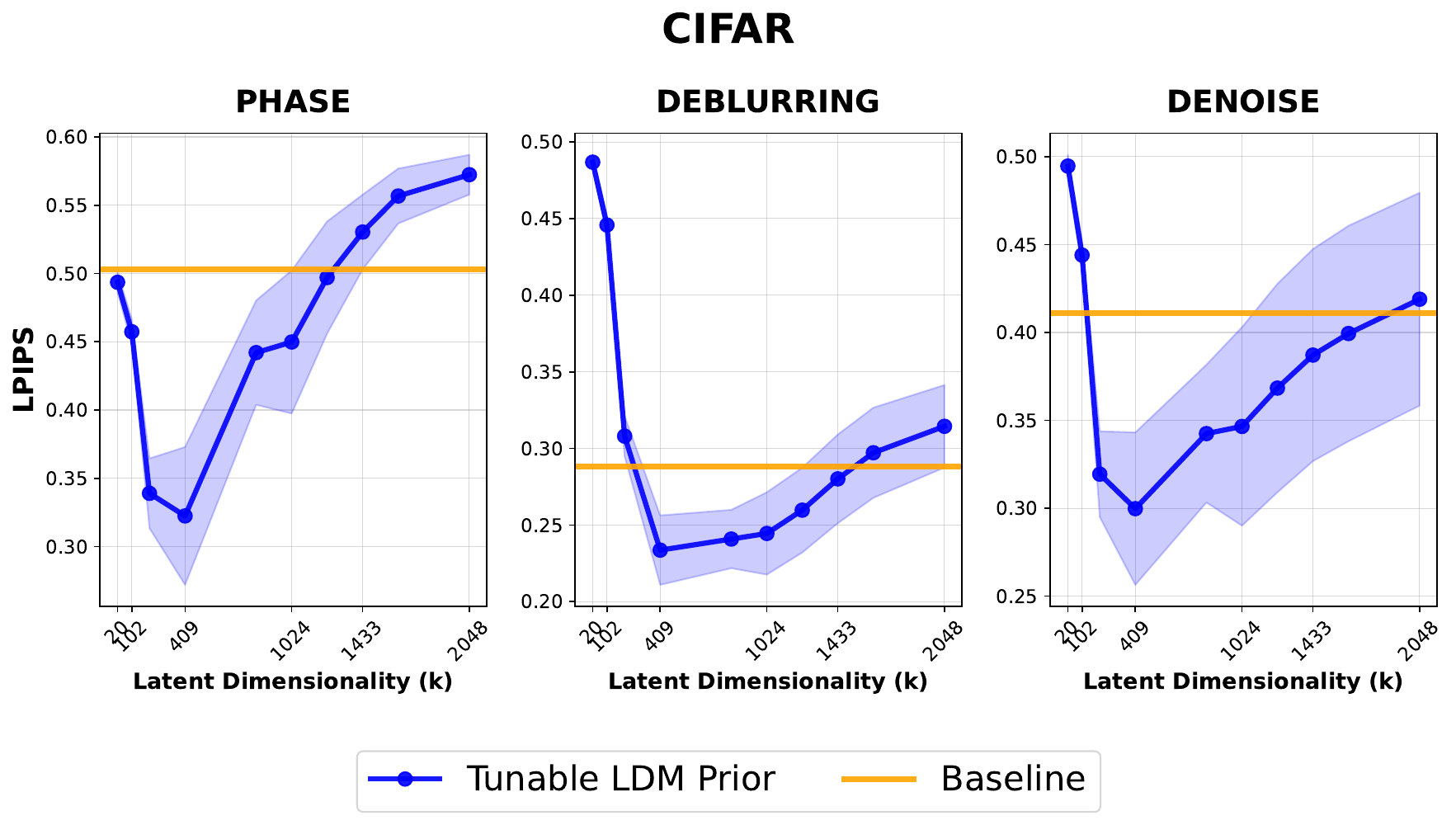}
		\captionof{figure}{Performance of a generative prior with tunable complexity (Tunable LDM Prior) and its fixed-complexity counterpart (Baseline) for compressed sensing and inpainting on CIFAR-10. The tunable prior demonstrates a range of parameters $k$ that yield better LPIPS scores than the baseline.}
		\label{fig:cifar_ushape}
	\end{minipage}
\end{center}

\paragraph{How does the number of measurements affect where the optimal latent dimensionality for reconstruction is?}

\begin{center}
	\begin{minipage}{\textwidth}
		\centering
		\includegraphics[width=0.85\textwidth]{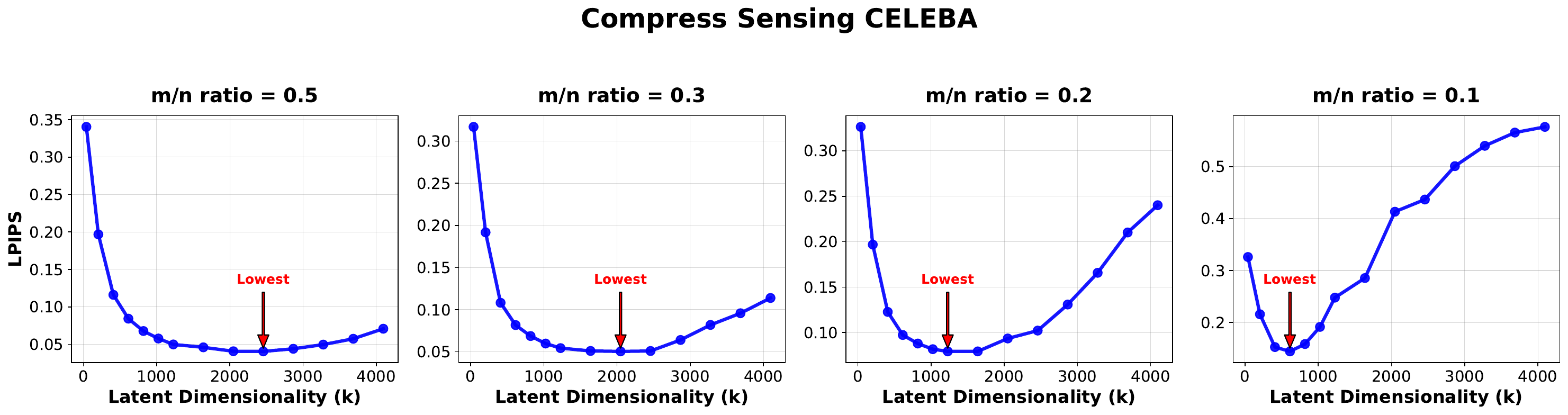}
		\captionof{figure}{Compressed sensing performance as a function of latent dimensionality $k$ for different measurement regimes.}
		\label{fig:cs_rebuttal}
	\end{minipage}
\end{center}
\vspace{1em}
\begin{center}
	\begin{minipage}{\textwidth}
		\centering
		\includegraphics[width=0.85\textwidth]{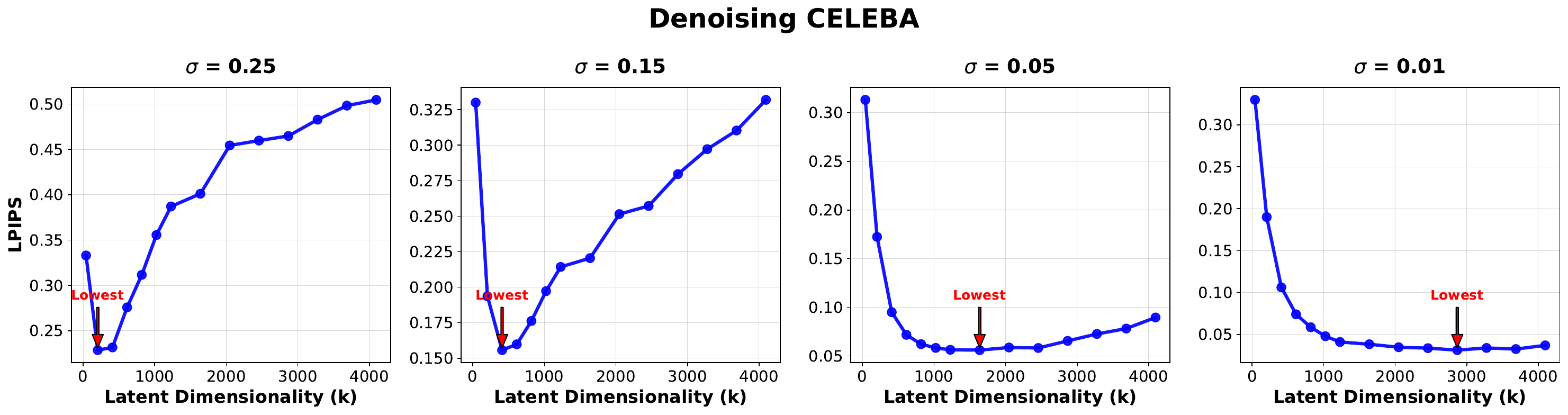}
		\captionof{figure}{Denoising performance as a function of latent dimensionality $k$ for different measurement regimes.}
		\label{fig:denoise_rebuttal}
	\end{minipage}
\end{center}

\paragraph{How the optimal $k$ was selected for \cref{tab:ffhq,tab:img_net}.}
We select the latent dimensionality $k$ using a held-out validation set.
Specifically, the value of $k$ achieving the lowest mean LPIPS score on the validation set is used for the results reported in \cref{tab:ffhq,tab:img_net}.

\begin{figure}[h!]
	\centering
	\includegraphics[width=0.7\linewidth]{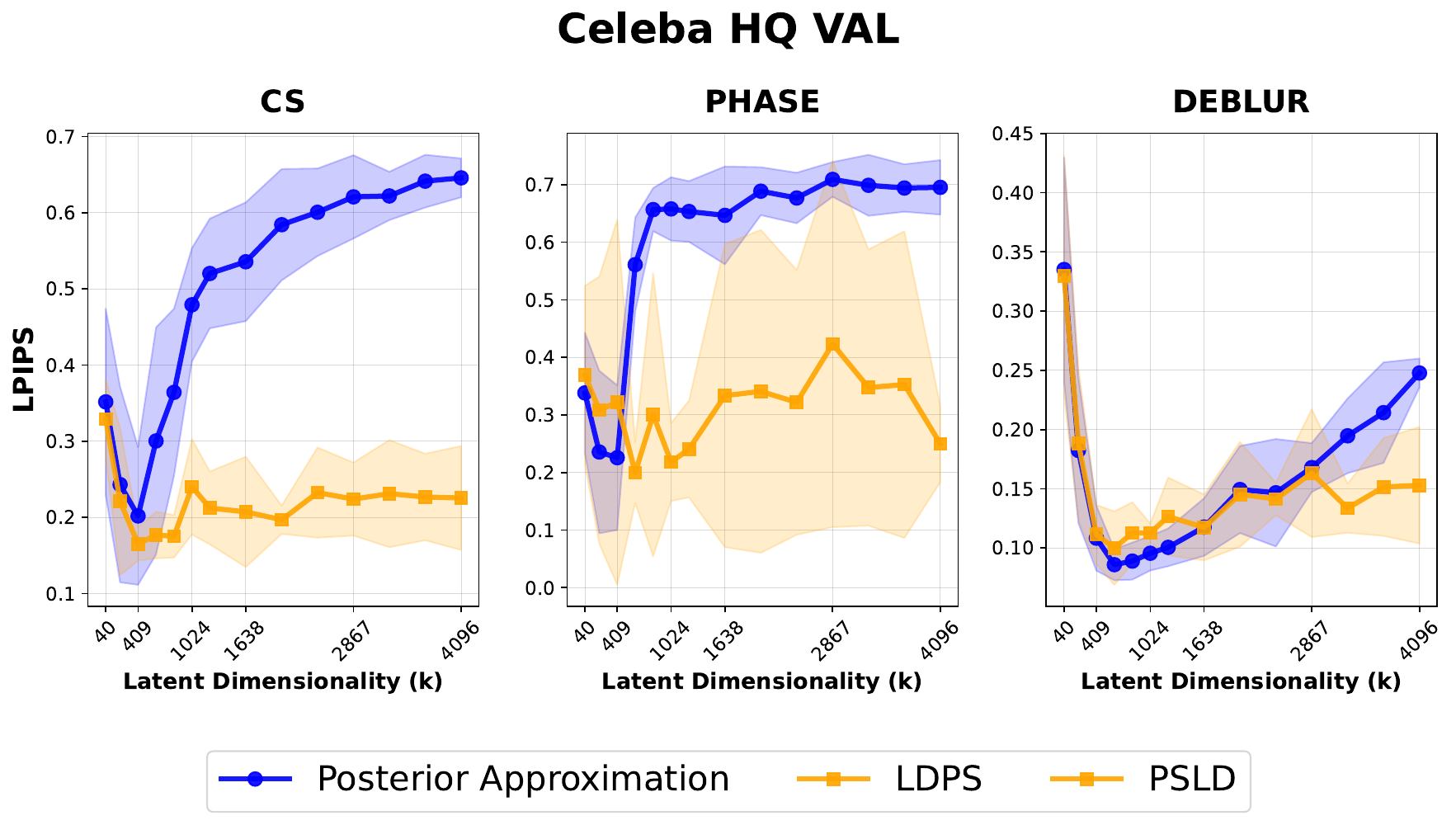}
	\caption{Validation performance used to select the latent dimensionality $k$.}
	\label{fig:val_k_selection}
\end{figure}

\begin{figure}[h!]
	\centering
	\includegraphics[width=0.7\linewidth]{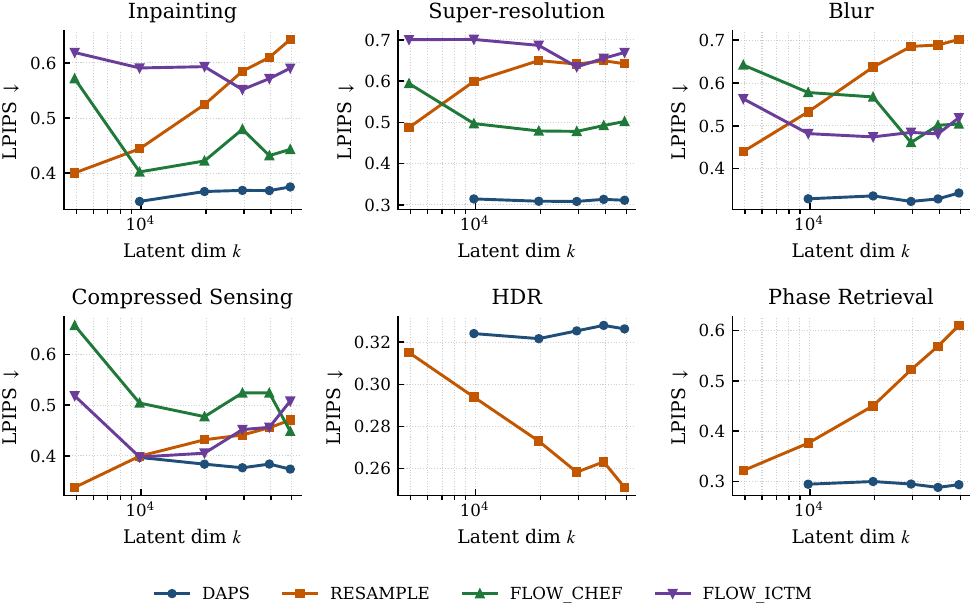}
	\caption{Simulation results corroborating the validation-based $k$-selection procedure used for \cref{tab:ffhq,tab:img_net}.}
	\label{fig:neurips_sim}
\end{figure}

\begin{figure}[p] 
	\centering

	\begin{minipage}{\textwidth}
		\centering
		\includegraphics[width=0.85\linewidth,height=0.65\linewidth,keepaspectratio]{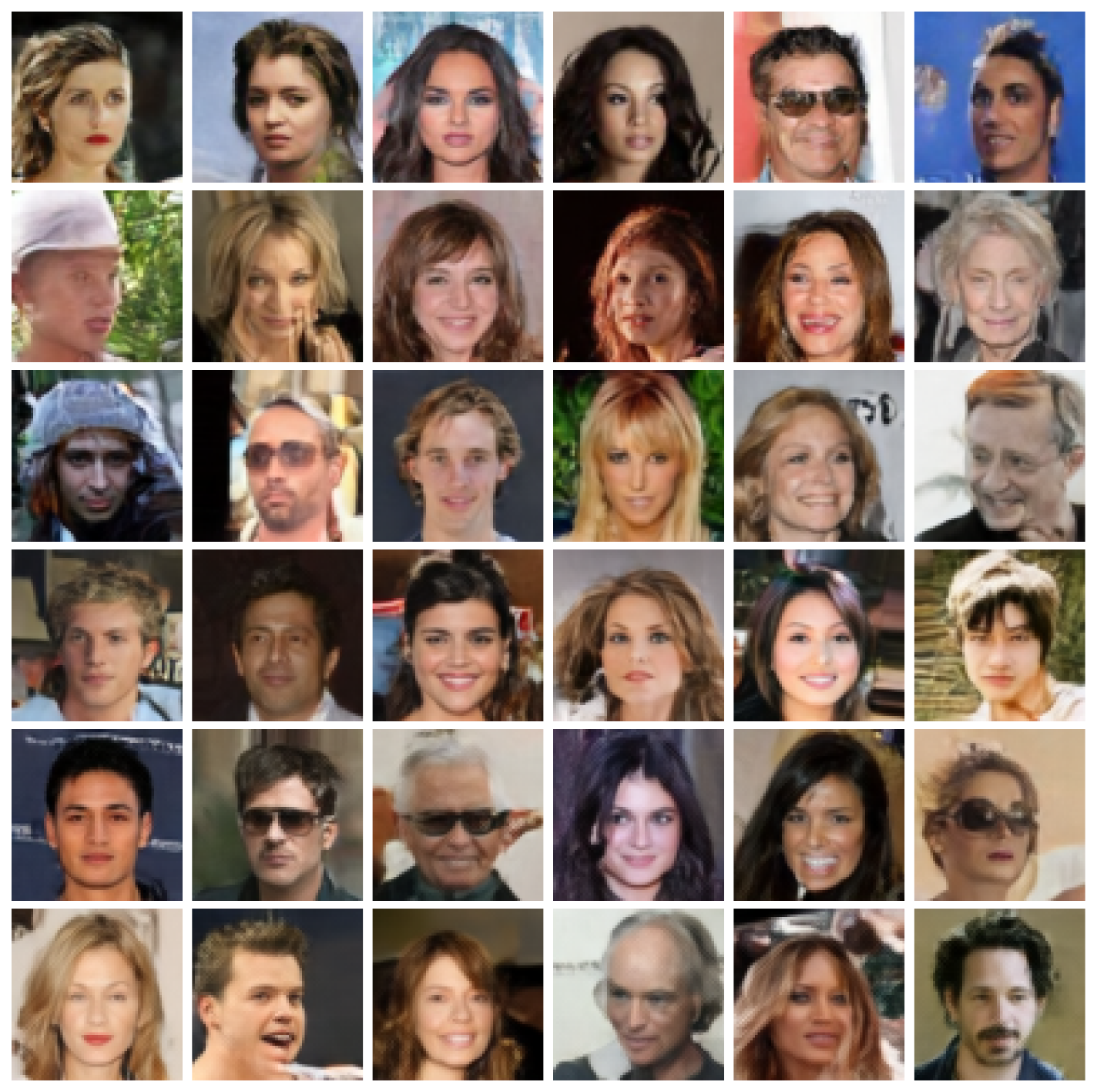}
		\caption{Generated samples from the latent diffusion model with different latent dimensionalities.
			Each row shows images produced when only the first $k$ of the $4096$ latent dimensions are kept during sampling
			($k=4096$: 100\%, $k=3600$: 88\%, $k=3000$: 73\%, $k=2500$: 61\%, $k=2000$: 49\%, $k=1500$: 37\%.)}
	\end{minipage}

	\vspace{2em} 

	\begin{minipage}{\textwidth}
		\centering
		\includegraphics[width=0.85\linewidth,height=0.5\linewidth,keepaspectratio]{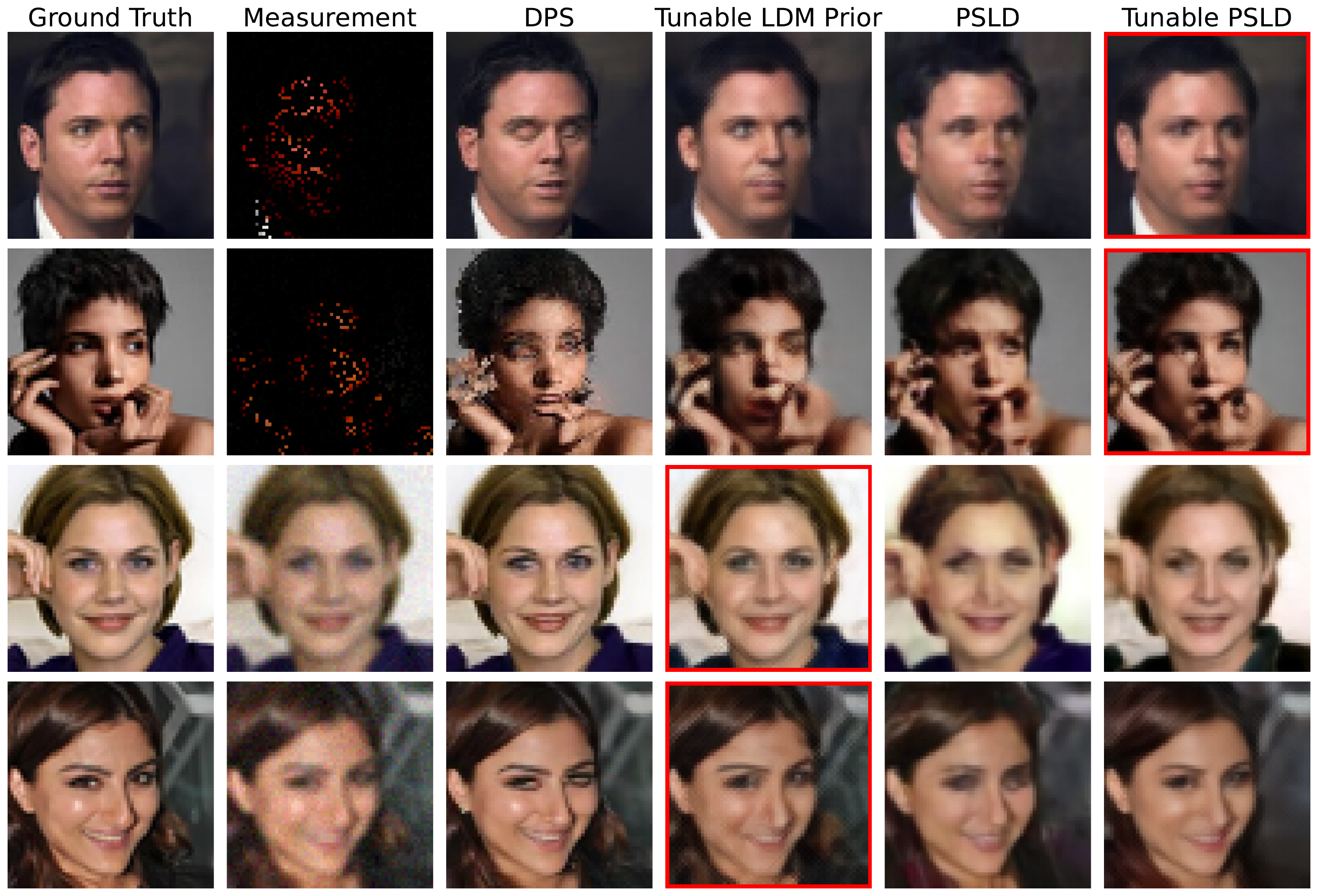}
		\caption{Qualitative results based on Table 2. Red indicates the lowest LPIPS score.}
		\label{fig:tunable_CELBA_vis}
	\end{minipage}
\end{figure}

\clearpage
\begin{table*}[htbp]
	\centering
	\small
	\setlength{\tabcolsep}{5pt}
	\renewcommand{\arraystretch}{1.3}
	\resizebox{\textwidth}{!}{
		\begin{tabular}{@{}lcccccccc@{}}
			\toprule
			 & \multicolumn{2}{c}{CS}
			 & \multicolumn{2}{c}{Phase Retrieval}
			 & \multicolumn{2}{c}{Inpainting (random)}
			 & \multicolumn{2}{c}{Deblur (Gauss.)}                                                             \\
			\cmidrule(lr){2-3}
			\cmidrule(lr){4-5}
			\cmidrule(lr){6-7}
			\cmidrule(lr){8-9}
			Method
			 & PSNR$\uparrow$                                 & LPIPS$\downarrow$
			 & PSNR$\uparrow$                                 & LPIPS$\downarrow$
			 & PSNR$\uparrow$                                 & LPIPS$\downarrow$
			 & PSNR$\uparrow$                                 & LPIPS$\downarrow$                              \\
			\midrule
			DPS~\citep{chung2023diffusion}
			 & $25.65 \pm 2.67$                               & \colorbox{blue!20}{$\mathbf{0.159 \pm 0.067}$}
			 & $22.15 \pm 3.84$                               & $0.267 \pm 0.061$
			 & $23.18 \pm 2.11$                               & $0.146 \pm 0.042$
			 & $27.90 \pm 1.53$                               & $0.093 \pm 0.019$                              \\

			Tunable LDM Prior (Ours)
			 & $25.49 \pm 1.70$                               & $0.179 \pm 0.045$
			 & $25.24 \pm 1.71$                               & \colorbox{blue!20}{$\mathbf{0.168 \pm 0.046}$}
			 & $25.30 \pm 1.76$                               & $0.122 \pm 0.028$
			 & \colorbox{blue!20}{$\mathbf{29.52 \pm 1.11}$}
			 & \colorbox{blue!20}{$\mathbf{0.085 \pm 0.024}$}                                                  \\

			LDPS / PSLD~\citep{rout2023solving}
			 & $23.30 \pm 1.24$                               & $0.248 \pm 0.063$
			 & $24.48 \pm 1.35$                               & $0.169 \pm 0.038$
			 & $25.85 \pm 2.18$                               & $0.136 \pm 0.036$
			 & $27.33 \pm 1.23$                               & $0.165 \pm 0.037$                              \\

			Tunable LDPS / PSLD (Ours)
			 & $24.69 \pm 1.37$                               & $0.164 \pm 0.036$
			 & $21.95 \pm 1.31$                               & $0.256 \pm 0.065$
			 & \colorbox{blue!20}{$\mathbf{26.75 \pm 1.71}$}
			 & \colorbox{blue!20}{$\mathbf{0.095 \pm 0.028}$}
			 & $28.50 \pm 1.05$                               & $0.115 \pm 0.031$                              \\

			NF~\citep{asim2020invertible}
			 & $21.31 \pm 1.94$                               & $0.508 \pm 0.079$
			 & $20.67 \pm 2.32$                               & $0.554 \pm 0.073$
			 & $20.07 \pm 2.42$                               & $0.376 \pm 0.091$
			 & $21.43 \pm 2.11$                               & $0.438 \pm 0.034$                              \\

			Tunable NF (Ours)
			 & \colorbox{blue!20}{$\mathbf{27.16 \pm 1.56}$}  & $0.238 \pm 0.058$
			 & \colorbox{blue!20}{$\mathbf{26.11 \pm 2.07}$}  & $0.246 \pm 0.076$
			 & $23.24 \pm 2.43$                               & $0.167 \pm 0.057$
			 & $24.85 \pm 2.21$                               & $0.215 \pm 0.043$                              \\
			\bottomrule
		\end{tabular}
	}
	\caption{Quantitative results on CelebA-HQ across inverse problems. Values reported as mean $\pm$ std over 100 images. \colorbox{blue!20}{\textbf{Bold}} indicates best per column.}
	\label{tab:celebahq}
\end{table*}

\paragraph{FID Scores for FFHQ Model}

We evaluate FID as a function of latent dimensionality $k$ within the $24\times 64\times 64$ latent space, computed against $10{,}000$ training images from FFHQ. The results are shown in \cref{fig:fid_ffhq}.

\begin{figure}[htbp]
	\centering
	\includegraphics[width=1.0\textwidth, height=0.4\linewidth, keepaspectratio]{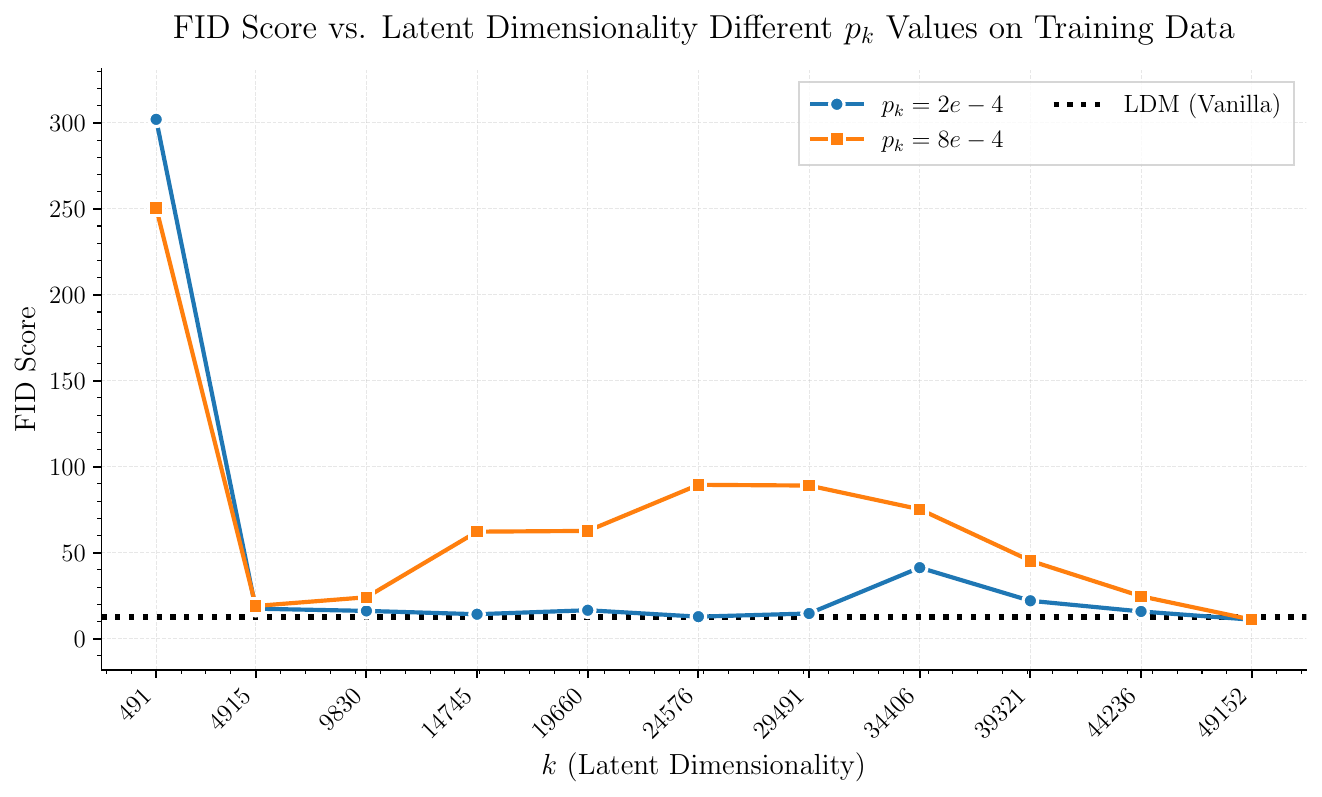}
	\caption{FFHQ FID score as a function of latent dimensionality $k$, computed against $10{,}000$ FFHQ training images.}
	\label{fig:fid_ffhq}
\end{figure}

\vspace{2em}
\section{Normalizing Flow Inverse Problems}
\label{app:NF}

We assume the practitioner has trained a tunable normalizing flow with the following objective provided by~\citep{bekasov2020ordering}. Our formulation is for latent-variable generative priors. For simplicity assume the measurements have the following form: the forward operator $\gA= \mA \in \R^{m \times n}$ and the additive noise term is i.i.d.\ $\bm{\eta} \sim \mathcal{N}(0,\sigma^2 \mI_{m})$, as in the compressed sensing case. Thus the measurements are generated by $\vy=\rmA \vx + \bm{\eta}$, where $\vx \in \R^{n}$ is an unknown signal.

Consider a fixed invertible neural network $G\colon\R^{n} \to \R^{n}$. Define for each $k \in [n] = \{1, \dots, n \}$ the map
\begin{align}
	\label{eqn:mapping}
	G_{k}  \colon \R^{k} & \longrightarrow \R^n \hspace{0.3cm}  \nonumber \\
	z                    & \longmapsto G\left(\zk\right). \nonumber
\end{align}
We note that $k$ is the parameter that governs the complexity of the prior, and  $\zk \in \R^{n}$ is obtained from the latent representation $\vz \in \R^{k}$. Then the signal representation is given by $\vx=G((\vz^T,0))$ where $\vx\in \R^{n}$ is as above.

We are interested in recovering a signal $\vx \in \R^{n}$ given a set of noisy measurements $\vy \in \R^{m}$. Our prior has a valid density over the entire signal space when $\vx \in \text{Range}(G_k)$, therefore a natural attempt to solve the given inverse problems is by a maximum \textsl{a posteriori} estimation:
\begin{align}
	\hat{\vx}_{\text{MAP}}(k) & \defeq  \argmin_{\vx \in \text{Range}(G_{k})}\; - \log p(\vx|\vy) \nonumber                                       \\
	                          & = \argmin_{\vx \in \text{Range}(G_{k})}\;-\log p(\vy|\vx) - \log p_{G_{k}}(\vx) \nonumber                         \\
	                          & = \argmin_{\vx \in \text{Range}(G_{k})}\; \frac{1}{2}\| \vy- \mA \vx \|^2 - \sigma^2 \log p_{G_{k}}(\vx) \nonumber
\end{align}
where $p_{G_{k}}$ is the density function on $\vx$ induced by $G_{k}$, $\| \cdot\|$ is the Euclidean norm, and $\sigma$ is given by the model noise. Similar to previous work of~\citep{asim2020invertible,whang2021solving}, we optimized over the latent space given by:
\begin{align}
	\hat{\vz}_{\text{MAP}} (k) & : =
	\argmin_{\vz \in \R^{k}}\; \frac{1}{2}\left\| \vy- \mA G \left(\zk\right) \right\|^2 \nonumber
	- \sigma^2 \log p_{G_{k}}\left(G\left(\zk\right)\right)  \nonumber
\end{align}

The proposed optimization problem above is solved via gradient descent with the Adam optimizer~\citep{KingBa15} and is initialized at $\vz=0$. Depending on the specific inverse problem at hand the tunable parameter $k$ can be selected to change the dimensionality of the model, so the first $k$ elements of $\vz$ will be optimized over and the rest will be set to zero. Our formulation relies on a density function $p_{G_{k}}$, and works by~\citep{asim2020invertible,whang2021solving} have shown that a smoothing parameter on a density function improves performance when solving the optimization problem above. We empirically observe the same phenomenon, and as a result, we replace $\sigma^2$ with a hyperparameter $\gamma$ and optimize the modified MAP estimate
\begin{align}
	\hat{\vz}_{\text{MAP}} (k) & : =
	\argmin_{\vz \in \R^{k}}\; \frac{1}{2}\left\| \vy- \mA G \left(\zk\right) \right\|^2 \nonumber
	- \gamma \log p_{G_{k}}\left(G\left(\zk\right)\right).  \nonumber
\end{align}

For the VAE we follow the same formulation as~\citep{bora2017compressed}, which is similar to the one above, but instead of having a tractable density, we approximate with an $\ell_{2}$ penalty in the latent space.

\begin{figure*}[t!]
	\centering
	\includegraphics[width=0.8\textwidth]{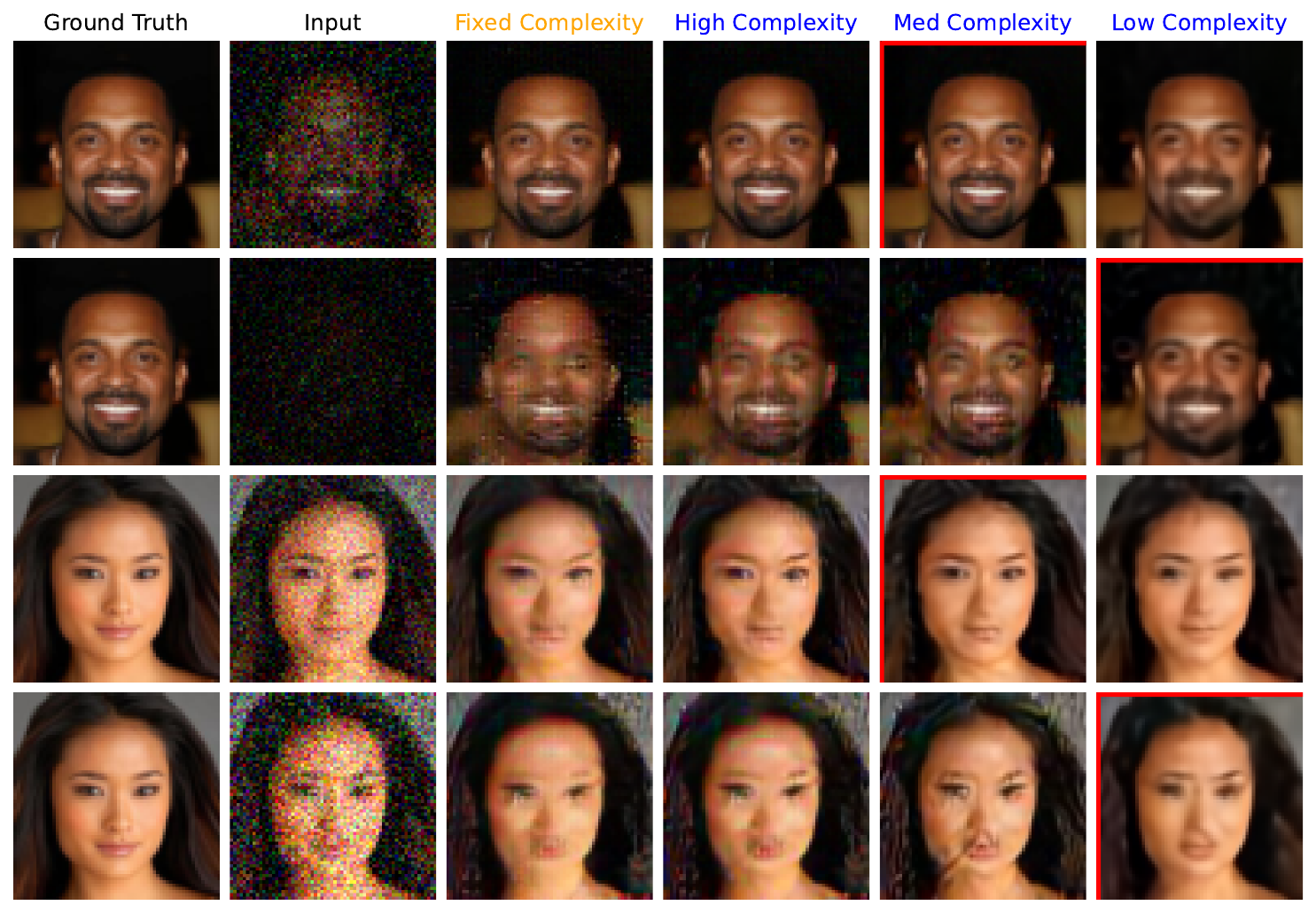} 
	\caption{Results on inpainting and denoising across various measurement regimes for Normalizing Flow. The tuning parameter should be chosen based on the complexities of the given inverse problem. The red bounding box indicates which model achieved the lowest reconstruction error with respect to SSIM.}
	\label{fig:tunable_CELBA_vis_nf}
\end{figure*}

\begin{figure}[H]

	\centering
	\includegraphics[width=1.0\textwidth]{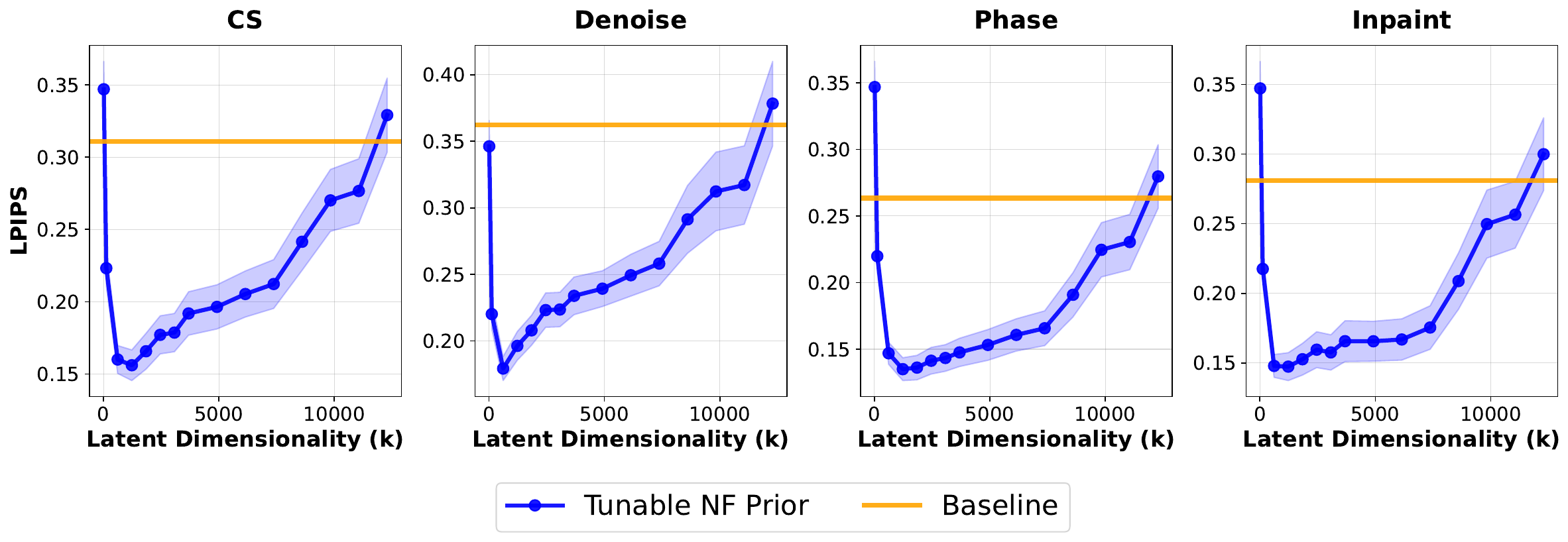}
	\caption{Performance of a generative prior with tunable complexity (Tunable NF Prior) and its fixed-complexity counterpart (Normalizing Flow) on compressed sensing, inpainting, phase retrieval, and denoising tasks on the CelebA dataset. The tunable prior exhibits a range of latent dimensionalities $k$ for which it yields improved reconstructions, as measured by the perceptual metric LPIPS, compared to the fixed-complexity baseline.
	}
	\label{fig:nf_flow_ushape}
\end{figure}

\paragraph{Generative Priors with Tunable Complexity on Various Architectures}
In these experiments we want to demonstrate that generative priors with tunable complexities are not tied to one specific normalizing-flow architecture. We train three architectures: Rational Quadratic Spline Flow~\citep{durkan2019neural}, GLOW~\citep{kingma2018glow}, and a GLOW-inspired variant called GLOW-ADD that uses an additive coupling layer instead of an affine one. Each architecture is trained on the SVHN dataset under matched hyperparameters to enable a controlled comparison.

\begin{figure}[h!]
	\centering
	\includegraphics[width=0.5\textwidth]{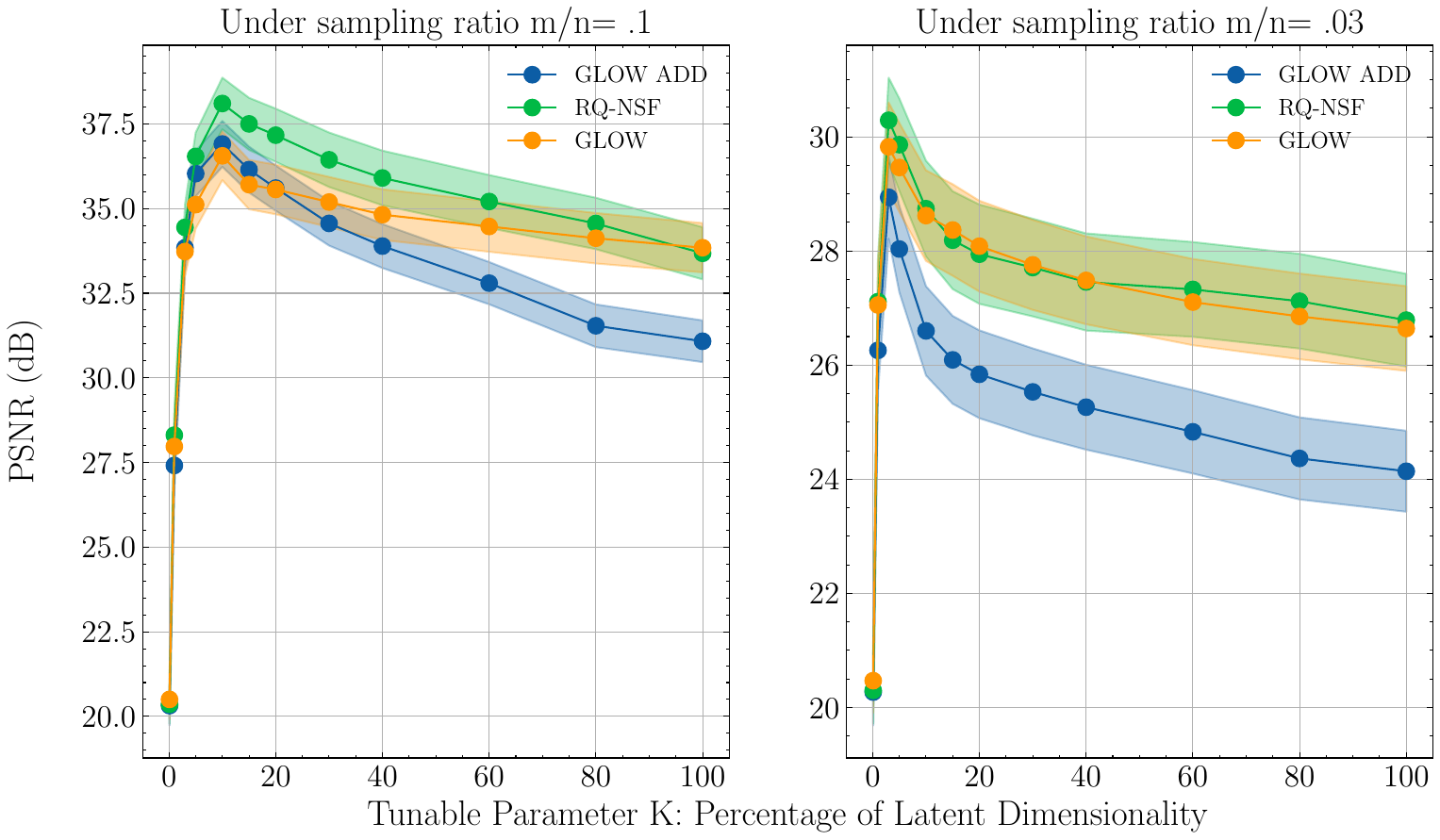}
	\caption{Reconstruction performance across three normalizing-flow architectures on SVHN; all benefit from a prior with tunable complexity.}
	\label{fig:SVHN}
\end{figure}

\begin{figure*}[h!]
	\centering
	\includegraphics[width=0.85\textwidth]{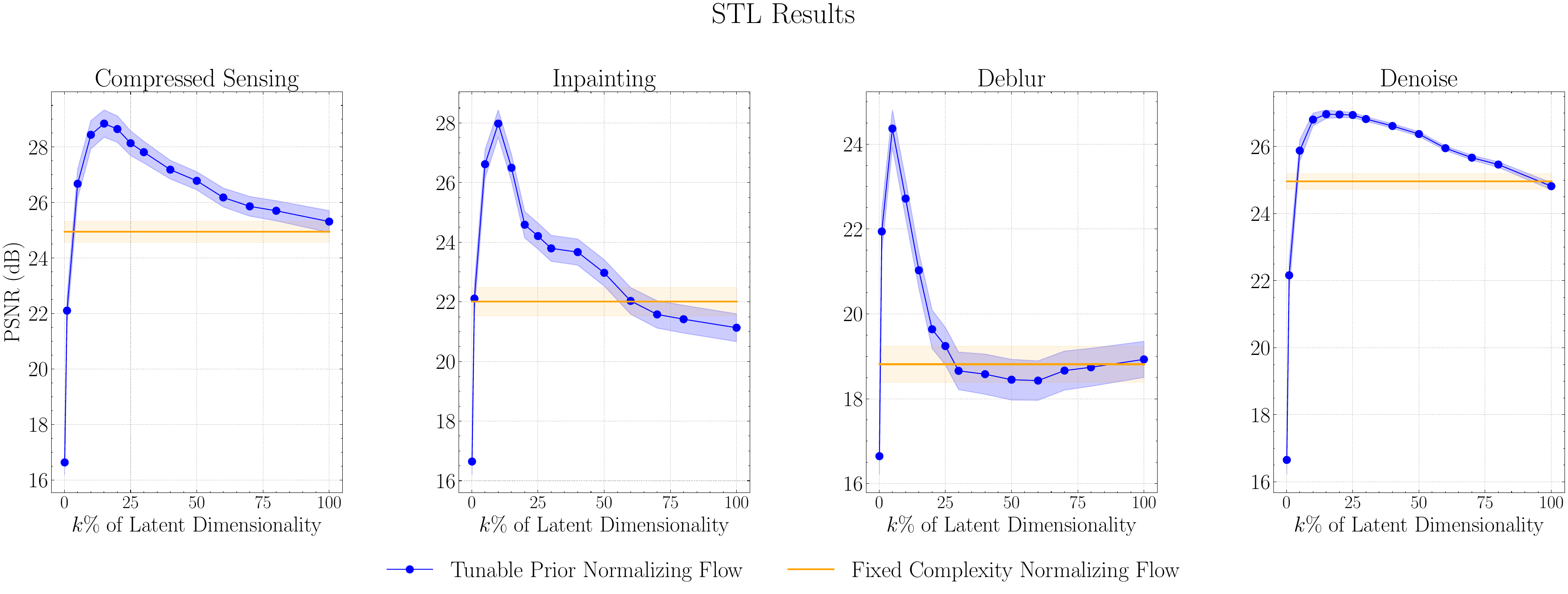}
	\caption{Performance of a generative prior with tunable complexity (Tunable Prior) and its fixed-complexity counterpart (Normalizing Flow) for compressed sensing, inpainting, deblurring, and denoising on the STL dataset. The tunable prior demonstrates a range of tunable parameters $k$ that lead to a better estimate of the target signal measured in PSNR than its baseline.}
	\label{fig:tunable_STL}
\end{figure*}

\begin{figure*}[h!]
	\centering
	\includegraphics[width=1\textwidth]{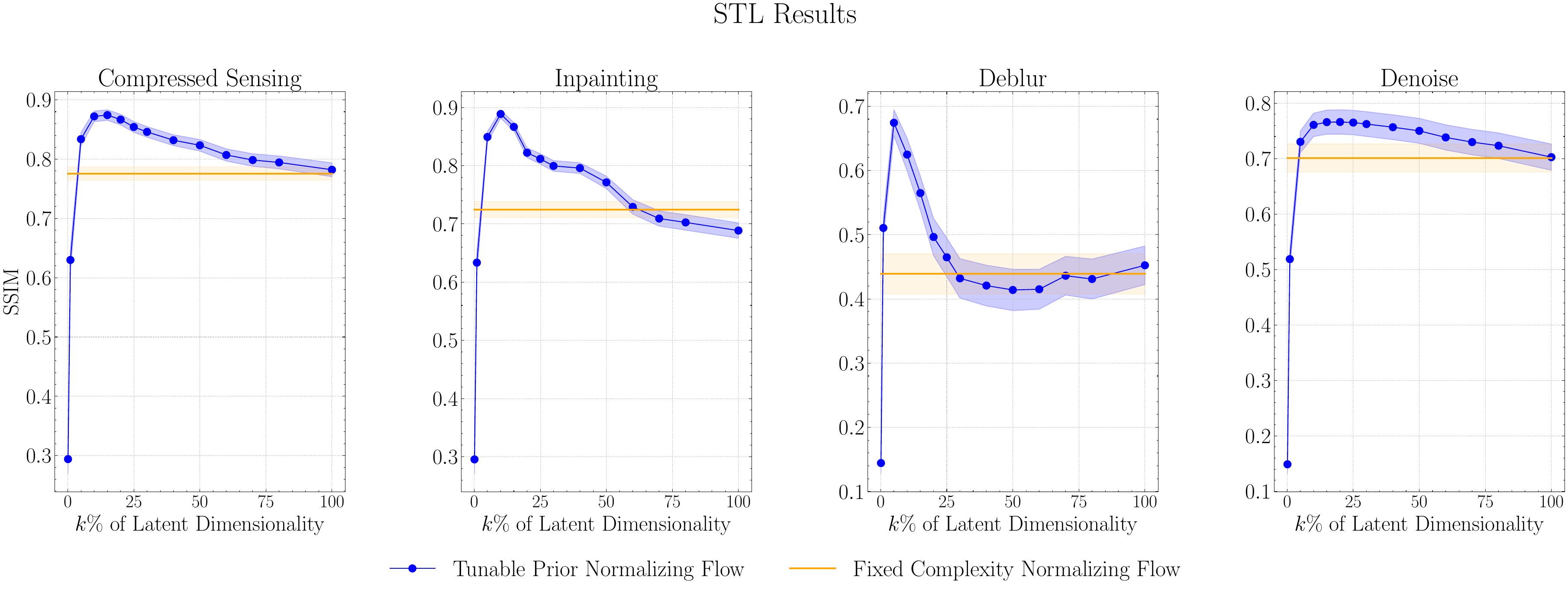}
	\caption{Performance of a generative prior with tunable complexity (Tunable Prior) and its fixed-complexity counterpart (Normalizing Flow) for compressed sensing, inpainting, deblurring, and denoising on the STL dataset. The tunable prior demonstrates a range of tunable parameters $k$, leading to a better estimate of the target signal measured in SSIM than its baseline.}
	\label{fig:tunable_STL_SSIM}
\end{figure*}

\newpage
\section{Proofs}
\label{app:proofs}
In \cref{sec:theory}, we present the maximum-likelihood setting in the main text because it is conceptually simpler. Here, we give the corresponding maximum \textsl{a posteriori} (MAP) formulation, of which maximum likelihood is a special case.

We consider a family of linear generative models defined as follows. Fix an invertible matrix $\mG \in \R^{n \times n}$, and let $\mG=\mU \bm{\Sigma} \mV^{T}$ be a singular value decomposition, where $\bm{\Sigma} = \diag(s_1, \ldots, s_n) \in \R^{n \times n}$. For any $k \leq n$, let $\Gk=\mU \Sk \mV^{T} \in \R^{n \times n}$, where
\[
	\Sk = \diag(s_1, \ldots, s_k, 0, \ldots, 0) \in \R^{n \times n}.
\]
Note that $\mG = \mG_n$. Each $\mG_k$ induces a probability distribution over $\R^n$ by
\[
	\vx = \mG_k \vz, \qquad \vz \sim \N(0, \mI_n).
\]
We denote this distribution by $p_{\mG_k}$ and observe that $p_{\mG_k} = \N(0, \mG_k \mG_k^T)$. The parameter $k$ controls the complexity of the modeled signal class.

We consider the following denoising problem. Let $\vy= \vx_{0} + \bm{\eta}$, where $\vx_{0} \sim p_{\Gn}$ and $\bm{\eta} \sim \N(0,\sigma^2 \mI_n)$ are independent. Our goal is to recover $\vx_0$ given $\vy$ and $\mG_n$. For a given $k$, we consider a maximum \textsl{a posteriori} estimate of $\vx_{0}$ under the signal prior $p_{\mG_k}$. This leads to the following optimization problem over the latent space:
\begin{align}
	\label{eq:latent_map}
	\zmapk
	 & = \argmin_{\vz^k \in \R^k}
	\frac{1}{2} \Bigl\|\vy-\Gk \zkt \Bigr\|^2
	+ \frac{\gamma}{2} \|\vz^k\|^2 .
\end{align}
Here, $\zkt \in \R^n$ denotes the vector $\vz^k$ padded with zeros, the estimated signal is $\xmapk = \mG_k \zmapkk$, and $\gamma \geq 0$ controls the strength of the latent prior. The case $\gamma=0$ corresponds to the maximum-likelihood estimate (MLE), while $\gamma=\sigma^2$ corresponds to the standard MAP estimator under the Gaussian latent prior. Since $\gamma$ is often treated as a hyperparameter in practice, we study the behavior of \cref{eq:latent_map} for all $\gamma \geq 0$.

The following theorem gives an exact expression for the mean-squared reconstruction error of this estimator.

\begin{theorem}
	\label{thm:map}
	Suppose we have a family $\{\Gk\}_{k=1}^{n}$ of generative models as defined above, with $p_{\Gk} = \N(0,\Gk \Gk^T)$. Let $\mG_n \in \R^{n \times n}$ have singular values $s_{1} \geq s_{2} \geq \cdots \geq s_{n} >0$, and suppose $\vx_{0} \sim p_{\mG_n}$ and $\bm{\eta} \sim \N(0,\sigma^2 \In)$ are independent. Then the estimator in \cref{eq:latent_map} satisfies
	\begin{align}
		\E_{\vx_0,\,\bm{\eta}}\big[\,\|\xmapk - \vx_{0}\|^2\,\big]
		= \sum_{i=1}^{k}
		\frac{s_{i}^2 \big(s_{i}^2 \sigma^2 + \gamma^2\big)}
		{(s_{i}^2 + \gamma)^2}
		+ \sum_{j=k+1}^{n} s_{j}^2 .
	\end{align}
\end{theorem}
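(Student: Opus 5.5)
The plan is to diagonalize everything in the SVD basis and reduce the problem to $n$ independent scalar problems.

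Write $\vx_0 = \mG_n \vz_0 = \mU\bm{\Sigma}\mV^T\vz_0$ with $\vz_0\sim\N(0,\mI_n)$. Set $\vw \defeq \mV^T\vz_0 \sim \N(0,\mI_n)$, which holds because $\mV$ is orthogonal. Similarly set $\bm{\xi}\defeq\mU^T\bm{\eta}\sim\N(0,\sigma^2\mI_n)$, independent of $\vw$. Then $\mU^T\vy = \bm{\Sigma}\vw + \bm{\xi}$, and since $\mU$ is orthogonal, $\|\xmapk-\vx_0\|^2 = \|\mU^T\xmapk - \bm{\Sigma}\vw\|^2$.

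For the latent variable, write $\vu\defeq \mV^T\zkt$. Because $\Sk$ annihilates coordinates $k+1,\dots,n$, the estimate $\mU^T\xmapk = \Sk\vu$ depends only on $u_1,\dots,u_k$. Here one subtlety must be handled carefully: the padded vector $\zkt$ lives in the span of the first $k$ standard basis vectors, not of the first $k$ columns of $\mV$. I would therefore interpret the model as intended by the theorem: the latent coordinates are aligned with the right singular vectors, which is equivalent to taking $\mV=\mI$ without loss of generality. This is justified because $p_{\mG_k}$ depends only on $\mG_k\mG_k^T = \mU\Sk^2\mU^T$, so replacing $\mV$ by $\mI$ leaves every prior unchanged. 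With this reduction, $u_i = z^k_i$ for $i\le k$, and $\|\vz^k\|^2 = \sum_{i\le k}u_i^2$.

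The objective in \cref{eq:latent_map} then separates into scalar problems $\tfrac12(\tilde y_i - s_i u_i)^2 + \tfrac{\gamma}{2}u_i^2$ for $i\le k$, where $\tilde y_i = s_i w_i + \xi_i$. Each has the unique minimizer $\hat u_i = s_i\tilde y_i/(s_i^2+\gamma)$, which is well defined because $s_i>0$. The reconstruction error in coordinate $i\le k$ is
\[
s_i\hat u_i - s_i w_i = -\frac{\gamma s_i}{s_i^2+\gamma}\,w_i + \frac{s_i^2}{s_i^2+\gamma}\,\xi_i .
\]
By independence and zero mean, its second moment is $\bigl(\gamma^2 s_i^2 + s_i^4\sigma^2\bigr)/(s_i^2+\gamma)^2 = s_i^2(s_i^2\sigma^2+\gamma^2)/(s_i^2+\gamma)^2$. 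For $j>k$ the estimate's coordinate is $0$, so the error is $s_j w_j$, with second moment $s_j^2$.

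Summing over coordinates gives the claimed formula. Setting $\gamma=0$ recovers \cref{thm:mse_k}, since each term with $i\le k$ then reduces to $\sigma^2$.

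The only real obstacle is the alignment issue noted above, namely making precise that the zero-padded latent corresponds to the top-$k$ singular directions. I would resolve it through the invariance of $p_{\mG_k}$ under the choice of $\mV$. The remaining steps are routine scalar computations.
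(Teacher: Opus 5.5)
Your proposal is correct and is essentially the paper's proof. Both reduce to the diagonal (SVD-basis) case, solve the ridge problem on the first $k$ coordinates, and split the error there into a bias part and a noise part whose cross term vanishes by independence, then collect $\sum_{j>k}s_j^2$ from the untouched coordinates. Your entrywise scalar computation is just the paper's Frobenius-norm and trace lemmas written out coordinate by coordinate.

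Your flag on the $\mV$-alignment is well taken and sharper than the paper. The paper's ``without loss of generality $\mG_k$ is diagonal'' step assumes this silently: rotational invariance of $\vz_0$ absorbs $\mV$ in the data model but not in the zero-padding constraint on the latent. Note, though, that your prior-invariance remark justifies $\mV=\mI$ only as the intended reading, since the literal padded estimator does depend on $\mV$.
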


The exact expression in \cref{thm:map} makes it possible to read off the optimal value of the tunable complexity parameter $k$, as established in the following corollary.

\begin{corollary}
	\label{cor:low_k_map}
	Under the assumptions of \cref{thm:map}, if $\gamma \leq \sigma^2/2$, then the value of $k$ that minimizes the reconstruction error is given by
	\begin{align}
		\argmin_{k \in [n]}
		\E_{\vx_0, \bm{\eta}} \big[ \| \xmapk- \vx_{0} \|^2 \big]
		= \max \Bigl\{ k \mid s_k \geq \sqrt{\sigma^2 - 2 \gamma}\Bigr\}.
	\end{align}
\end{corollary}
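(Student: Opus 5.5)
The plan is to treat the right-hand side of \cref{thm:map} as a function of $k$ and study its forward differences. Write
\[
	f(k) \defeq \sum_{i=1}^{k} a_i + \sum_{j=k+1}^{n} s_j^2,
	\qquad
	a_i \defeq \frac{s_i^2\,(s_i^2\sigma^2+\gamma^2)}{(s_i^2+\gamma)^2},
\]
so that by \cref{thm:map} $f(k)$ equals the expected reconstruction error for every $k\in[n]$. It is convenient to extend this to $f(0)=\sum_{j=1}^n s_j^2$. Then, for $1\le k\le n$,
\[
	\Delta_k \defeq f(k)-f(k-1) = a_k - s_k^2 .
\]
This is well defined because $s_k>0$ and $\gamma\ge 0$ give $s_k^2+\gamma>0$.

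The key computation is to put $\Delta_k$ over the common denominator $(s_k^2+\gamma)^2$ and expand. The numerator is
\[
	s_k^2\bigl[s_k^2\sigma^2+\gamma^2-(s_k^2+\gamma)^2\bigr] = s_k^4\bigl(\sigma^2-2\gamma-s_k^2\bigr).
\]
The $\gamma^2$ terms cancel, which is exactly why the threshold is linear in $\gamma$. Since $s_k^4/(s_k^2+\gamma)^2>0$, this gives
\[
	\operatorname{sign}\Delta_k=\operatorname{sign}\bigl(\sigma^2-2\gamma-s_k^2\bigr).
\]
The hypothesis $\gamma\le\sigma^2/2$ ensures $\sigma^2-2\gamma\ge 0$, so the threshold $\tau\defeq\sqrt{\sigma^2-2\gamma}$ is real. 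We then have $\Delta_k\le 0$ iff $s_k\ge\tau$, and $\Delta_k>0$ iff $s_k<\tau$.

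Next I use the ordering $s_1\ge\cdots\ge s_n$. The set $\{k: s_k\ge\tau\}$ is an initial segment $\{1,\dots,k^*\}$, where $k^*=\max\{k\mid s_k\ge\tau\}$. Consequently $f$ is nonincreasing on $\{0,\dots,k^*\}$ and strictly increasing on $\{k^*,\dots,n\}$. Telescoping then gives:
\begin{itemize}
	\item $f(k)>f(k^*)$ for every $k>k^*$;
	\item $f(k)\ge f(k^*)$ for every $k<k^*$.
\end{itemize}
Hence $k^*$ is a minimizer over $[n]$, and it is the largest minimizer. The only non-uniqueness comes from indices with $s_k=\tau$ exactly, where $\Delta_k=0$; the $\max$ in the statement resolves these ties. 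This mirrors \cref{cor:low_k}, which is the case $\gamma=0$.

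The step most likely to cause trouble is not algebraic but a boundary issue. If $s_1<\tau$, the set in the statement is empty and the true minimizer over $[n]$ is $k=1$. So, as in \cref{cor:low_k}, I would make explicit the implicit assumption $s_1\ge\sqrt{\sigma^2-2\gamma}$, or else interpret the empty case as $k=1$. Beyond that, the only care needed is in the sign bookkeeping of the expanded numerator.
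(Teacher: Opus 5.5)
Your proposal is correct and takes essentially the same approach as the paper. The paper also examines $E(k-1)-E(k)$, reduces its sign to $s_k \geq \sqrt{\sigma^2-2\gamma}$, uses the ordering of the $s_k$ to show this candidate set is contiguous, and sums the nonnegative increments $s_i^2\Delta(i)$ to conclude that the largest such $k$ is optimal; your explicit factorization $s_k^4(\sigma^2-2\gamma-s_k^2)/(s_k^2+\gamma)^2$, your treatment of ties, and your observation that the statement silently needs $s_1 \geq \sqrt{\sigma^2-2\gamma}$ (a point the paper's proof also glosses over) make your version somewhat more careful.
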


This corollary shows that, for sufficiently small $\gamma$ and sufficiently large noise, the optimal complexity parameter can be strictly smaller than the full signal dimensionality. In the MLE case $\gamma=0$, an intermediate signal complexity is optimal whenever the noise level exceeds the smallest singular value of the linear generator $\mG$. More generally, as the noise level increases, the optimal value of the tunable complexity parameter decreases. Setting $\gamma=0$ in \cref{thm:map,cor:low_k_map}, which \cref{thm:mse_k} demonstrates.

\begin{proof}
	[Proof of \cref{thm:mse_k}]
	\smallskip${}$
	\\
	Without loss of generality, we take $G_k$ to be a diagonal matrix.  This follows because of invariance of Euclidean norms with respect to orthogonal transformations and because of the rotational invariance of the random variable $\bm{z_0}$.
	Let $\Pk$ and $\Pko$ be the orthogonal projectors onto $\text{span}(\{e_1, \ldots, e_k\})$ and its orthogonal complement, respectively. Throughout this proof, we identify each projected vector with its coordinates in the corresponding standard basis, so that $\Pk(\vx) \in \R^k$ and $\Pko(\vx) \in \R^{n-k}$; this identification preserves Euclidean norms. For a diagonal matrix $\Sigma=\diag(s_{1},s_{2},\cdots,s_{n}) \in \R^{n \times n}$, let $\Skmat \in \R^{k \times k}$ and $\Snkmat \in \R^{n-k \times n-k}$ be such that
	\[
		\bm{\Sigma} = \begin{pmatrix}
			\Skmat         & \boldsymbol{0} \\
			\boldsymbol{0} & \Snkmat
		\end{pmatrix}.
	\]

	We want to find the expectation
	\begin{align}
		\E_{\vx_0, \eta} \| \xmapk-\vx_0 \|^2 & =
		\E_{\vx_0, \bm{\eta}} \| \Pk (\xmapk-\vx_0) \|^2 +  \E_{\vx_0} \| \Pko (\vx_0) \|^2.
		\label{eqn:exp-mse}
	\end{align}
	For the equation above we will compute the first term on the right hand side first, then do the same for the second term on the right hand side. The first term on the right hand side breaks into a sum of two terms which can readily be computed, while the second term on the right hand side follows from a lemma. Notice $\Pk(\xmapk) =\Skmat \zmapk \in \R^{k}$ and the solution of the optimization problem \eqref{eq:latent_map} is
	\begin{align*}
		\zmapk : & = (\Skmat \Skmat^{T} + \gamma \Ik)^{-1} \Skmat^{T} \Pk(y)                                \\
		         & = (\Skmat \Skmat^{T} + \gamma \Ik)^{-1} \Skmat^{T} (\Skmat \Pk(z_{0}) + \Pk(\bm{\eta})). \\
	\end{align*}
	Thus,
	\begin{align}
		\E_{\vx_0, \bm{\eta}} \| & \Pk (\xmapk-\vx_0) \|^2 = \E_{\bm{z_0}, \bm{\eta}} \| \Skmat \zmapk- \Skmat \Pk(\bm{z_0}) \|^2 \nonumber                          \\
		                         & = \E_{\bm{z_0}, \bm{\eta}} \| \Skmat (\Skmat \Skmat^{T} + \gamma \Ik)^{-1} \Skmat^{T} \Pk(y)- \Skmat \Pk(\bm{z_0}) \|^2 \nonumber \\
		                         & = \E_{\bm{z_0}, \bm{\eta}} \| \Skmat (\Skmat \Skmat^{T} + \gamma \Ik)^{-1} \Skmat^{T} (\Skmat \Pk(z_{0}) + \Pk(\bm{\eta})) \notag \\
		                         & - \Skmat \Pk(\bm{z_0}) \|^2 \nonumber                                                                                             \\
		                         & = \E_{\bm{z_0}, \bm{\eta}} \| (\Skmat (\Skmat \Skmat^{T} + \gamma \Ik)^{-1} \Skmat^{T} \Skmat -\Skmat) \Pk(\bm{z_0}) \notag \\
		                         & + \Skmat (\Skmat \Skmat^{T} + \gamma \Ik)^{-1} \Skmat^{T} \Pk(\bm{\eta}) \|^2 \label{eq:decompose}
	\end{align}
	The last term above \cref{eq:decompose} can be decomposed
	\begin{align}
		 & \E_{\bm{z_0}} \left\| \left( \Skmat (\Skmat \Skmat^{T} + \gamma \Ik)^{-1} \Skmat^{T} \Skmat - \Skmat \right) \Pk(\vz_0) \right\|^2 \label{eq:A} \\
		 & + 2\E_{\bm{z_0}, \bm{\eta}} \left\langle \left( \Skmat (\Skmat \Skmat^{T} + \gamma \Ik)^{-1} \Skmat^{T} \Skmat - \Skmat \right) \Pk(\bm{z_0}), \Skmat (\Skmat \Skmat^{T} + \gamma \Ik)^{-1} \Skmat^{T} \Pk(\bm{\eta}) \right\rangle \label{eq:inner} \\
		 & + \E_{\bm{\eta}} \left\| \Skmat (\Skmat \Skmat^{T} + \gamma \Ik)^{-1} \Skmat^{T} \Pk(\bm{\eta}) \right\|^2 \label{eq:B}
	\end{align}

	The expectation of the inner product term (\cref{eq:inner}) is zero because $\vz_0$ and $\bm{\eta}$ are independent and have zero means. Furthermore, we apply \cref{lemma:gauss_fro} to \cref{eq:A}, yielding

	\begin{align}
		\E_{\bm{z_0}} & \| (\Skmat (\Skmat \Skmat^{T} + \gamma \Ik)^{-1} \Skmat^{T} \Skmat - \Skmat) \Pk(\vz_0) \|^{2} \nonumber \\
		              & = \| \Skmat (\Skmat \Skmat^{T} + \gamma \Ik)^{-1} \Skmat^{T} \Skmat - \Skmat\|_{F}^{2} \nonumber \\
		              & = \sum_{i=1}^{k} \Bigl| \frac{s_{i}^3}{ s_{i}^2 + \gamma } - s_{i} \Bigr|^2 \nonumber                    \\
		              & = \sum_{i=1}^{k} \frac{s_{i}^2 \gamma^2}{(s_{i}^2 + \gamma)^2} \label{eq:exp_final_k}.
	\end{align}
	Then we apply \cref{lemma:gauss_l2_vec} to \cref{eq:B}
	\begin{align}
		\E_{\bm{\eta}} & \|\Skmat (\Skmat \Skmat^{T} + \gamma \Ik)^{-1} \Skmat^{T} \Pk(\bm{\eta}) \|^2 \nonumber                                                       \\
		               & = \Tr(\Skmat (\Skmat \Skmat^{T} + \gamma \Ik)^{-1} \Skmat^{T} \sigma^2 \Ik \Skmat (\Skmat \Skmat^{T} + \gamma \Ik)^{-1} \Skmat^{T}) \nonumber \\
		               & = \sum_{i=1}^{k} \frac{s_{i}^4 \sigma^2}{ (s_{i}^2 + \gamma)^2} \label{eq:exp_final_noise}.
	\end{align}
	This concludes the calculation of the first term on the right hand side of \cref{eqn:exp-mse}. Now we apply \cref{lemma:gauss_fro} to the second term on the right hand side
	\begin{align}
		\label{eqn:ortho_exp}
		\E_{\vx_0 }  \| \Pko (\vx_0) \|^2  = \E_{z_{0}} \| \Snkmat \Pko(z_{0}) \|^2 = \| \Snkmat \|_{F}^2 = \sum_{j=k+1}^n s_{j}^2.
	\end{align}

	Lastly, we combine Eq.\ref{eqn:ortho_exp},\ref{eq:exp_final_k},\ref{eq:exp_final_noise} and simplify
	\[
		\E_{\vx_0, \bm{\eta}} \| \xmapk-\vx_0 \|^2
		= \sum_{i=1}^{k} \frac{s_{i}^2 (s_{i}^2 \sigma^2 + \gamma^2)}{(s_{i}^2 + \gamma)^2} + \sum_{j=k+1}^n s_{j}^2.
	\]
\end{proof}

\begin{proof}
	[Proof of Corollary~\ref{cor:low_k}]
	\smallskip${}$
	\\
	For notational convenience, let $E(k) = \E_{\vx_0, \bm{\eta}} \| \hat{\vx}_\gamma (k) - \vx_0 \|^2$. Since $[n]$ is a finite non-empty set, $\argmin_{k} \, E(k)$ exists, and so we consider the set $C$ of \emph{candidate} minimizers,
	\[
		C = \{k \mid E(k - 1) \geq E(k)\}.
	\]
	Notice $E(k-1) - E(k) \geq 0$ for each $k \in C$, and moreover that this difference is
	\begin{align*}
		 & \sum_{i=1}^{k-1}  \frac{s_{i}^2 (s_{i}^2 \sigma^2 + \gamma^2)}{(s_{i}^2 + \gamma)^2} + \sum_{j=k}^n s_{j}^2 - \sum_{i=1}^{k} \frac{s_{i}^2 (s_{i}^2 \sigma^2 + \gamma^2)}{(s_{i}^2 + \gamma)^2} \notag \\
		 & - \sum_{j=k+1}^n s_{j}^2 \geq 0                                                                                                                                                                        \\
		 & \iff                                                                                                                                                                                                   \\
		 & s_{k}^2 - \frac{s_{k}^2 (s_{k}^2 \sigma^2 + \gamma^2)} {(s_{k}^2 + \gamma)^2} \geq 0 \\
		 & \iff                                                                                                                                                                                                   \\
		 & s_{k} \geq \sqrt{\sigma^2 - 2 \gamma},
	\end{align*}
	which reveals
	\[
		C = \{k \mid s_k \geq \sqrt{\sigma^2 - 2\gamma}\}.
	\]
	From our assumptions, $\sigma^2 - 2\gamma \geq 0$. Note now that for any $i, j \in C$ with $i < j$, we have that
	\begin{align*}
		s_i \geq s_k \geq s_j \geq \sqrt{\sigma^2 - 2\gamma} \quad \text{for all $i < k \leq j$,}
	\end{align*}
	hence $k \in C$.

	To prove the corollary is to prove that $\argmin_k E(k) =\max C$, so it suffices to show $E$ is decreasing on $C$. Since $s_i^2 \geq \sigma^2 - 2\gamma$ for all $i \in C$, it is clear that
	\begin{align*}
		s_i^4 + s_i^2(2\gamma - \sigma^2) \geq s_i^4 - s_i^2 \cdot s_i^2 = 0
	\end{align*}
	is non-negative. From this, we deduce that $s_i^2 \sigma^2 + \gamma^2 \leq (s_i^2 + \gamma)^2$, hence
	\begin{align*}
		\Delta(i) \defeq  1 - \frac{s_i^2 \sigma^2 + \gamma^2}{(s_i^2 + \gamma)^2} \geq 0
	\end{align*}
	is also non-negative.

	Let $P, Q \in C$ be such that $P \leq Q$. We'll show $E(P) \geq E(Q)$ by showing the difference $E(P) - E(Q)$ is non-negative. It is easily computed by
	\begin{align*}
		E(P) - E(Q) = \sum_{i = P + 1}^Q s_i^2 \Delta(i).
	\end{align*}
	The above shows that $\Delta(i)$ is non-negative for all $i$ in the range $P < i \leq Q$ since all such $i$ are elements of $C$, thus concluding the proof.
\end{proof}

\begin{lemma}\label{lemma:gauss_fro}
	Let $\vx \in \R^n$ satisfy $\vx \sim \mathcal{N}(0,\In)$. Then, for any matrix $\mM \in \R^{m \times n}$, we have
	\begin{align*}
		\E_{\vx} \Vert \mM \vx\Vert^2 = \|\mM \|_{F}^2.
	\end{align*}
\end{lemma}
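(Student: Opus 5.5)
The plan is to expand the squared norm coordinatewise and take expectations term by term. Write $\|\mM\vx\|^2 = \vx^T \mM^T\mM\vx = \sum_{j,l} (\mM^T\mM)_{jl}\, x_j x_l$. This is a finite sum of products of Gaussian coordinates, so linearity of expectation applies directly and no integrability issues arise.

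The second step uses the moment structure of $\vx \sim \mathcal{N}(0,\In)$. The coordinates are independent with mean zero and unit variance, so $\E[x_j x_l] = \delta_{jl}$. Equivalently, $\E[\vx\vx^T] = \In$.

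Substituting gives
\begin{align*}
\E_{\vx}\|\mM\vx\|^2 = \sum_{j}(\mM^T\mM)_{jj} = \Tr(\mM^T\mM) = \sum_{i,j} M_{ij}^2 = \|\mM\|_F^2 .
\end{align*}
Alternatively, one can write $\E\,\Tr(\mM\vx\vx^T\mM^T) = \Tr(\mM\,\E[\vx\vx^T]\,\mM^T)$ and use the cyclic property of the trace.

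There is no genuine obstacle. The only points to state explicitly are that the expectation may be interchanged with the trace (or with the finite sum), and that the identity $\E[\vx\vx^T]=\In$ is exactly what the standard Gaussian hypothesis supplies. In fact the argument uses only second moments, so the lemma holds for any isotropic mean-zero $\vx$. This is the generality in which it is invoked for $\Pk(\vz_0)$ and $\Pko(\vz_0)$ in the proof of \cref{thm:mse_k}.
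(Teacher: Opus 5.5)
Your proof is correct and is essentially the paper's argument. The paper writes $\E\|\mM\vx\|^2 = \E\langle \mM^T\mM, \vx\vx^T\rangle_F = \langle \mM^T\mM, \In\rangle_F = \|\mM\|_F^2$, which is your expansion $\sum_{j,l}(\mM^T\mM)_{jl}\,\E[x_jx_l]$ in matrix form, and both use only $\E[\vx\vx^T]=\In$. One small point about your closing remark: the lemma as stated, rather than your isotropic generalization, already covers its uses in the theorem, since $\Pk(\vz_0)$ and $\Pko(\vz_0)$ are themselves standard Gaussian in $\R^k$ and $\R^{n-k}$.
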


\begin{proof}[Proof of Lemma~\ref{lemma:gauss_fro}]
	\smallskip${}$
	\\
	For matrices, let $\langle A,B\rangle_F = \Tr(A^T B)$ denote the Frobenius inner product. Since $\E_{\vx}[\vx\vx^T]=\In$, we have
	\begin{align*}
		\E_{\vx} \| \mM \vx \|^2 & =\E_{\vx} \langle \mM \vx, \mM \vx\rangle                \\
		                         & = \E_{\vx} \langle \mM^{T} \mM,\vx \vx^{T}\rangle_F \notag \\
		                         & =\langle \mM^{T} \mM,\In \rangle_F \notag                  \\
		                         & = \| \mM \|_{F}^2.
	\end{align*}
\end{proof}

\begin{lemma}\label{lemma:gauss_l2_vec}
	Let $\vx \in \R^n$ satisfy $\vx \sim \mathcal{N}(0,\bm{\Sigma})$. Then, for any matrix $\mM \in \R^{m \times n}$, we have
	\begin{align*}
		\E_{\vx} \Vert \mM \vx\Vert^2 = \Tr(\mM \bm{\Sigma} \mM^T).
	\end{align*}
\end{lemma}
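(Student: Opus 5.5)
The statement to prove is Lemma~\ref{lemma:gauss_l2_vec}: for $\vx \sim \N(0,\bm{\Sigma})$ and any $\mM \in \R^{m\times n}$, $\E_{\vx}\|\mM\vx\|^2 = \Tr(\mM\bm{\Sigma}\mM^T)$. I will follow the same trace argument used in the proof of Lemma~\ref{lemma:gauss_fro}, replacing the identity second-moment matrix by $\bm{\Sigma}$.

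The first step is to rewrite the squared norm as a trace. Because $\|\mM\vx\|^2$ is a scalar, it equals its own trace, and cyclicity of the trace gives
\[
\|\mM\vx\|^2 = \Tr\big((\mM\vx)(\mM\vx)^T\big) = \Tr\big(\mM\vx\vx^T\mM^T\big).
\]

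The second step is to take expectations inside the trace. This is allowed because the trace is linear and $\vx$ has finite second moments, so
\[
\E_{\vx}\|\mM\vx\|^2 = \Tr\big(\mM\,\E_{\vx}[\vx\vx^T]\,\mM^T\big).
\]

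The third step is to identify the second moment. Since $\vx$ has mean zero, $\E[\vx\vx^T]$ is exactly its covariance $\bm{\Sigma}$. Substituting this gives $\Tr(\mM\bm{\Sigma}\mM^T)$, which is the claim.

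There is no real obstacle here. The only point needing care is that mean zero is used to equate the second moment with the covariance. Gaussianity itself is not needed beyond the existence of second moments.

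As a consistency check, taking $\bm{\Sigma}=\In$ gives $\Tr(\mM\mM^T)=\|\mM\|_F^2$, which recovers Lemma~\ref{lemma:gauss_fro}. This is also the form in which the lemma is used in the proof of \cref{thm:mse_k}: there it is applied with $\bm{\Sigma}=\sigma^2\Ik$ to evaluate the noise term \cref{eq:B}.
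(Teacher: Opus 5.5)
Your proof is correct and is essentially the paper's argument. The paper sets $\mY=\mM\vx\sim\N(0,\mM\bm{\Sigma}\mM^T)$ and sums the coordinate variances $\E[\mY_i^2]$ to obtain the trace, which is the same covariance computation you do by moving the expectation inside $\Tr(\mM\vx\vx^T\mM^T)$; your version also makes explicit that only zero mean and finite second moments are needed, not Gaussianity.
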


\begin{proof}[Proof of Lemma~\ref{lemma:gauss_l2_vec}]
	\smallskip${}$
	\\
	Let $\mY=\mM\vx \in \R^m$. Then $\mY \sim \mathcal{N}(0,\mM \bm{\Sigma} \mM^T)$. Thus,
	\begin{align}
		 & \E_{\vx} \Vert \mM \vx\Vert^2 = \E_{\mY} [\mY^T \mY] \notag \\
		 & = \sum_{i=1}^m \E_{\mY} [\mY_{i}^2] = \sum_{i=1}^m \text{Var}(\mY_{i}) \notag \\
		 & = \Tr(\mM \bm{\Sigma} \mM^T).
	\end{align}

\end{proof}

\end{document}